\documentclass[twocolumn]{article}

\usepackage{url}
\usepackage[hidelinks]{hyperref}
\usepackage[utf8]{inputenc}
\usepackage[small]{caption}
\usepackage{graphicx}
\usepackage{amsmath}
\usepackage{amsthm}
\usepackage{amsfonts}
\usepackage{amssymb}
\usepackage{booktabs}
\usepackage{cleveref}
\usepackage[ruled,vlined]{algorithm2e}
\usepackage[switch]{lineno}
\usepackage[square,semicolon]{natbib}
\usepackage{subcaption}
\usepackage{parskip}

\newtheorem{theorem}{Theorem}
\newtheorem{assumption}{Assumption}
\newtheorem{definition}{Definition}
\newtheorem{lemma}{Lemma}

\newcommand{\E}{\mathbb{E}}
\newcommand{\R}{\mathbb{R}}
\newcommand{\Prob}{\mathbb{P}}
\newcommand{\nonparset}{\mathcal{S}}
\newcommand{\paretoset}{\mathcal{P}^\star}
\newcommand{\estparetoset}{\hat{\mathcal{P}}^\star}
\newcommand{\estmean}{\boldsymbol{\tilde{\theta}}^{(t)}}
\newcommand{\firstfront}{\mathcal{F}^\star_1}
\newcommand{\secondfront}{\mathcal{F}^\star_2}
\newcommand{\UQ}{\mathcal{U}_\beta}
\newcommand{\Bhat}{\beta}
\newcommand{\bern}{\mathcal{B}}

\let\cite\citep

\title{Bayesian Anytime Pareto Set Identification for\\ Multi-Objective Multi-Armed Bandits}
\author{%
  \parbox{\linewidth}{\centering
    Lennert Saerens$^{1,2,}$\footnote{Corresponding author: Lennert.Saerens@vub.be}, Bram Silue$^{1}$, Eleni Litsa$^{2}$, Peter Vrancx$^{1,2}$ and Pieter Libin$^{1,3}$ \\[0.4em]
    $^1$Artificial Intelligence Lab, Departement of Computer Science, Vrije Universiteit Brussel, Brussels, Belgium \quad $^2$imec, Leuven, Belgium \quad $^3$Data Science Institute, Interuniversity Institute of Biostatistics and Statistical Bioinformatics, UHasselt, Hasselt, Belgium
  }%
}
\date{}

\begin{document}

\maketitle

\begin{abstract}
    Identifying Pareto optimal solutions is critical to support multi-objective decision-making. We introduce the first anytime Multi-Objective Multi-Armed Bandit algorithm for the Pareto Set Identification problem, taking a Bayesian approach: \textit{Top-Two Pareto Front Thompson Sampling} (TTPFTS). We benchmark TTPFTS against state-of-the-art fixed-budget Pareto Set Identification algorithms on synthetic environments. Next, we demonstrate its practical utility in a challenging multi-objective molecular discovery setting by efficiently exploring an ultra-large synthesis-on-demand molecular library. Furthermore, we introduce a novel uncertainty quantification metric that estimates our algorithm’s confidence in the predicted Pareto set. We demonstrate that this metric effectively proxies true performance, yielding a robust methodology for monitoring learning progress in complex settings. Finally, we complement these empirical findings with a theoretical proof of the algorithm’s asymptotic correctness.
\end{abstract}

\section{Introduction}
The \textit{Multi-Armed Bandit} (MAB) problem has been widely examined in the literature, primarily as a single-objective stochastic optimization task. In this setting, an agent sequentially draws samples from a collection of $K$ unknown probability distributions, referred to as arms. Two prevalent problem variants are either to maximize the cumulative rewards, commonly framed as \textit{regret minimization}~\cite{Auer2002FinitetimeAO,lattimore2020bandit}, or to identify the arm with the highest expected value, known as \textit{best-arm identification}~\cite{Audibert2010BestAI,pmlr-v49-russo16}. 

Yet, in many real-world settings, multiple and potentially conflicting objectives must be optimized simultaneously, and no single arm may achieve the optimal score across all criteria~\cite{Hayes2021APG}. Consequently, the rewards become multi-dimensional and several arms can be Pareto optimal. This gives rise to the \textit{Multi-Objective Multi-Armed Bandit} (MOMAB) framework first proposed by \citeauthor{Drugan2013DesigningMM}, \citeyear{Drugan2013DesigningMM}. A significant body of work focuses on minimizing Pareto regret by prioritizing the selection of optimal arms during the agent's sampling process~\cite{Drugan2013DesigningMM,Yahyaa2015ThompsonSF,Cheng2024HierarchizePD}. We instead focus on the pure-exploration formulation introduced by \citeauthor{Zuluaga2013ActiveLF}, \citeyear{Zuluaga2013ActiveLF} and \citeauthor{Auer2016ParetoFI}, \citeyear{Auer2016ParetoFI}. This problem, known as \textit{Pareto Set Identification} (PSI), aims to identify the set of Pareto optimal arms $\paretoset$ as efficiently and accurately as possible. An arm is included in the Pareto optimal set $\paretoset$ if it is not dominated by any other arm, meaning that no alternative arm has strictly better expected values for all of the objectives. Hence, the arms present in the Pareto set provide optimal trade-offs between objectives. This multi-objective exploration problem of identifying $\paretoset$ arises in numerous applications, including materials discovery~\cite{Gopakumar2018}, clinical trials~\cite{Zhao03062018}, epidemic mitigation policies~\cite{REYMOND2024123686}, molecular design~\cite{LiuMolecules2025}, and drug discovery~\cite{FromerMolecules2024}.

\paragraph{Contributions.} While significant strides have been made in the fixed-confidence and fixed-budget MOMAB PSI settings, a notable gap remains regarding the \textit{anytime} setting. In this work, we address this limitation by introducing \textit{Top-Two Pareto Front Thompson Sampling} (TTPFTS), the first Bayesian anytime algorithm designed for the MOMAB PSI problem. We first evaluate the algorithm's empirical performance against state-of-the-art fixed-budget \textit{Empirical Gap Elimination} (EGE) algorithms~\cite{Kone2023BanditPS} on synthetic benchmarks. Next, we validate the algorithm in a real-world decision-making scenario: exploring synthesis-on-demand molecular libraries to identify molecules with desirable properties. In this setting, we show that our algorithm achieves significant efficiency gains over exhaustive virtual screening, which remains the standard approach despite its high computational cost~\cite{Lyu2019UltraLarge,WaltersMols2024}, and over a state-of-the-art active learning method \cite{FromerMolecules2024}. Finally, we introduce a novel uncertainty quantification metric specific to Bayesian MOMAB PSI algorithms. This metric exploits information contained in the posteriors to estimate confidence in current Pareto predictions without ground-truth knowledge, offering a practical and interpretable methodology for guiding decision-making in complex multi-objective settings.

\paragraph{Related work.} The seminal works in the MOMAB PSI setting are attributed to \citeauthor{Zuluaga2013ActiveLF}, \citeyear{Zuluaga2013ActiveLF} and \citeauthor{Auer2016ParetoFI}, \citeyear{Auer2016ParetoFI}. The latter proposed a fixed-confidence algorithm that identifies Pareto optimal arms, potentially including suboptimal arms that become Pareto optimal when increased coordinate-wise by $\epsilon_1$~\cite{Auer2016ParetoFI}. Concurrently, Zuluaga \textit{et al\@.} studied a structured variant of fixed-confidence PSI where means are regular functions of arm descriptors. Utilizing Gaussian process modeling, they derived worst-case sample complexity bounds~\cite{Zuluaga2013ActiveLF,zuluaga2016epal}. Building on these foundations, the \textit{Adaptive Pareto Exploration} algorithm was proposed in~\cite{Kone2023AdaptiveAF}, which allows for various stopping rules to accommodate different relaxations of the PSI problem. More recently, \cite{pmlr-v238-crepon24a} established tight guarantees in the asymptotic regime by iteratively solving an optimization problem characterizing the optimal problem-dependent sample complexity. Also within the fixed-confidence setting, \cite{Kone2025ParetoSI} introduced a Bayesian approach relying on posterior distributions to guide sampling and stopping decisions. Addressing the fixed-budget PSI setting, \cite{Kone2023BanditPS} developed the EGE algorithms, which extend the successive-halving and successive-rejects strategies to multi-objective environments. Finally, numerous variants of the standard PSI setting have been explored, including (top) feasible-arm identification~\cite{KatzSamuels2018FeasibleAI,KatzSamuels2019TopFA,Kone2025ConstrainedPS} and PSI under partial ordering~\cite{Ararat2021VectorOW}.

\section{Setting}
In this section, the \textit{Multi-objective Multi-armed Bandit} (MOMAB) \textit{Pareto Set Identification} (PSI) setting is formalized. Throughout this section, for consistency, we adopt the notation used in~\cite{Kone2023BanditPS}.

A MOMAB PSI setting consists of $K$ arms, each arm corresponding to a reward distribution $\nu_1,\cdots,\nu_K$ over $\R^D$ with respective expectations $\boldsymbol{\theta}_1,\cdots,\boldsymbol{\theta}_K$, where $D>1$ represents the number of objectives. The bandit instance is defined as $\nu := (\nu_1,\cdots,\nu_K)$. At each time $t=1,2,\cdots$ the agent pulls an arm $a_t \in [K]$ and observes an independent sample $\boldsymbol{X}_t\sim \nu_{a_t}$ with $\E\left[\boldsymbol{X}_t\right]=\boldsymbol{\theta}_{a_t}$. The goal of an anytime MOMAB PSI algorithm is to estimate the Pareto set after every observation. These are precisely the arms that are not Pareto dominated by any other arm. The following definitions are adopted from~\cite{Kone2023BanditPS}.

\begin{definition}[Pareto domination]
Given two arms $i,j \in [K]$, $i$ is (Pareto) dominated by $j$ ($\boldsymbol{\theta}_i \preceq \boldsymbol{\theta}_j$ or $i \preceq j$) if for all $d \in \{1,\dots,D\}$, $\theta_i^d \le \theta_j^d$ and there exists $d \in \{1,\dots,D\}$ such that $\theta_i^d < \theta_j^d$. The arm $i$ is strongly (Pareto) dominated by $j$ ($\boldsymbol{\theta}_i \prec \boldsymbol{\theta}_j$ or $i \prec j$) if for all $d \in \{1,\dots,D\}$, $\theta_i^d < \theta_j^d$.
\end{definition}

\begin{definition}[Pareto set]
    The Pareto set $\paretoset(\nu)$ is
    \[
    \paretoset(\nu) := \{ i \in [K] \mid \nexists j \in [K] : \boldsymbol{\theta}_i \preceq \boldsymbol{\theta}_j \},
    \]
    and is denoted by $\paretoset$ when $\nu$ is clear from the context. The set of suboptimal arms is denoted as $\nonparset = [K] \setminus \paretoset$.
\end{definition}

\begin{definition}[Pareto (sub)optimal arm]
    Any arm $a \in \paretoset$ is called (Pareto) optimal and an arm $a \notin \paretoset$ is called suboptimal.
 \end{definition}


 While the anytime MOMAB PSI setting requires the algorithm to maintain and update its estimate of the Pareto set after each observation~\cite{Bubeck2009PureEI}, other formulations of the problem consider stopping conditions that depend on a predefined accuracy or budget constraint. In the \textit{fixed-confidence} MOMAB PSI setting, the learner aims to identify the true Pareto set $\paretoset$ with a probability of at least $1-\delta$ for some confidence level $\delta \in (0,1)$, while minimizing the total number of samples collected. Formally, the learner must design a stopping time $\tau$ and an estimator $\estparetoset_\tau$ such that $\Prob(\estparetoset_\tau=\paretoset)\ge1-\delta$~\cite{NIPS2012_8b0d2689,Kone2023AdaptiveAF,pmlr-v238-crepon24a,Kone2025ParetoSI}. In contrast, the \textit{fixed-budget} MOMAB PSI setting assumes a total sampling budget $B$ is fixed a priori. The learner must then use these $B$ samples to produce an estimator $\estparetoset_B$ of the Pareto set that maximizes the probability of correct identification $\Prob(\estparetoset_B=\paretoset)$~\cite{NIPS2012_8b0d2689,Kone2023BanditPS}. The anytime formulation, as considered in this work, is more flexible: the algorithm continuously refines its Pareto set estimate $\estparetoset_t$ after every observation without requiring a predefined stopping rule~\cite{AnytimeHype}. Finally, our work adopts a \textit{preference-unaware} approach~\cite{Osika2023BeyondParetoFront}, aiming to recover the \textit{entire} Pareto front without incorporating decision-maker preferences.

\section{Algorithm}
In this section, we present the \textit{Top-Two Pareto Front Thompson Sampling} (TTPFTS) algorithm, which, to the best of our knowledge, constitutes the first algorithmic exploration of the anytime MOMAB PSI setting. The proposed TTPFTS algorithm extends the principles of \textit{Top-Two Thompson Sampling} (TTTS)~\cite{pmlr-v49-russo16} to the multi-objective domain. In single-objective best-arm identification, TTTS selects a candidate best arm and then repeatedly resamples to find a different arm that outperforms the candidate, effectively allocating samples between these two arms. TTPFTS generalizes this mechanism to the MOMAB setting without requiring iterative re-sampling. Without a preference vector to scalarize objectives, there is no single best arm. Instead, there exists a set of Pareto optimal arms representing the trade-off surface. Consequently, TTPFTS generalizes the top-two notion from arms to \textit{Pareto fronts}: it stochastically distributes samples between the current estimated Pareto front, the best set,  and the second Pareto front, the challenger set, according to a fixed probability.

\begin{figure}[ht]
    \centering
    \includegraphics[width=\linewidth]{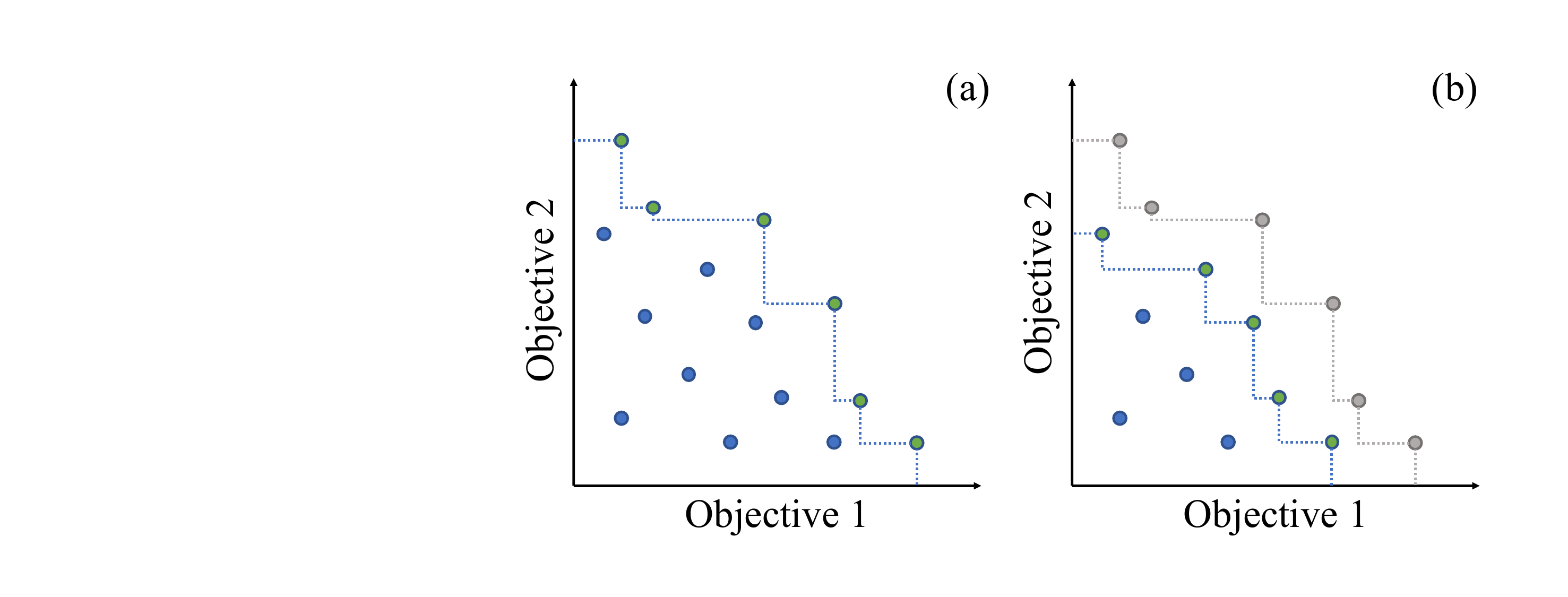}
    \caption{Visualization of the TTPFTS strategy in a bi-objective maximization setting. The samples obtained at time $t$ by sampling from the posterior distributions $ \pi(\cdot | \mathcal{H}^{(t-1)})$ are shown as dots. With probability $\rho$, the optimal arms $\firstfront$ are computed and one of them is chosen uniformly at random (a). With probability $1-\rho$, the optimal arms are removed and one of the optimal arms $\secondfront$ of the remaining set is chosen uniformly at random (b).}
    \label{fig:TopTwoFronts}
\end{figure}

The detailed procedure is outlined in Algorithm 1. At each time step $t$, the algorithm begins by sampling from the $D$-variate posterior distributions $\pi(\cdot \mid \mathcal{H}^{(t-1)})$ for each arm $a \in [K]$. For each arm, this generates an estimate $\estmean_a$ of its expected value, as illustrated by the dots in \Cref{fig:TopTwoFronts}. Once these samples are obtained, TTPFTS performs a Bernoulli trial with success probability $\rho$. The outcome of this trial decides which front to explore. If the trial yields 1, the algorithm identifies the set of Pareto optimal arms $\firstfront$ based on the current sampled means. As shown in \Cref{fig:TopTwoFronts}(a), an arm $a_t$ is then selected uniformly at random from this set. This step focuses on verifying arms that are currently believed to be optimal. If the trial yields 0 (probability $1-\rho$), the algorithm removes the optimal set $\firstfront$ and considers the subset of remaining arms $\mathcal{A}' = [K] \setminus \firstfront$. Subsequently, it computes the Pareto front of this subset, denoted as $\secondfront$. We refer to $\secondfront$ as the \textit{second front}, visualized in \Cref{fig:TopTwoFronts}(b). An arm $a_t$ is selected uniformly at random from this second front. This step promotes exploration among the most promising suboptimal arms, effectively testing if they have been underestimated or solidifying them as suboptimal. In the event that all arms are mutually non-dominating and the remaining subset $\mathcal{A}'$ is empty, TTPFTS selects an arm uniformly at random from $\firstfront$.

\begin{algorithm}[ht]
    \DontPrintSemicolon
    \caption{Top-Two Pareto Front Thompson Sampling (TTPFTS)}
    \KwIn{MOMAB instance $\nu$ with $K$ arms, parameter $\rho \in (0,1)$, prior $\pi$, history $\mathcal{H}^{(0)} = \emptyset$}
    
    \For{$t \leftarrow 1$ \KwTo $\infty$}{
        \For{$a \leftarrow 1$ \KwTo $K$}{
            Sample $\estmean_a \sim \pi(\cdot \mid \mathcal{H}^{(t-1)})$\;
        }
        
        Compute the first front: $\firstfront=\{ i \in [K] \mid \nexists j \in [K] : \estmean_i \preceq \estmean_j \}$\;       
        Sample $b \sim \mathcal{B}(\rho)$\;
        \If{$b=1$}{
            Select arm $a_t$ uniformly at random from $\firstfront$\;
        } \Else{
            Compute remaining arms $\mathcal{A}' = [K] \setminus \firstfront$\;
            Compute the second front: $ \secondfront = \{ i \in \mathcal{A}' \mid \nexists j \in \mathcal{A}' : \estmean_i \preceq \estmean_j \}$\;
            Select arm $a_t$ uniformly at random from $\secondfront$\;
        }

        Pull arm $a_t$ and observe reward $\boldsymbol{X}_t \sim \nu_{a_t}$\;
        Update history $\mathcal{H}^{(t)} \leftarrow \mathcal{H}^{(t-1)} \cup \{(a_t, \boldsymbol{X}_t)\}$\;
        Compute and recommend $\estparetoset_t$, the Pareto set of the posterior means\;
    }
\end{algorithm}

This strategy mirrors the core principle of \textit{Boundary Focused Thompson Sampling}~\cite{LibinBFTS2019}, which addresses the single-objective top-$m$ problem by targeting the decision boundary between the $m$-th and $(m+1)$-th arms. By generalizing this concept, TTPFTS focuses exploration on the multi-objective equivalent of that boundary: the frontier between the first and second Pareto fronts. Finally, regardless of which front was selected, the learner pulls the chosen arm $a_t$, observes the reward vector $\boldsymbol{X}_t \sim \nu_{a_t}$, and updates the history $\mathcal{H}^{(t)}$. As this is an anytime algorithm, the learner maintains a recommendation set $\estparetoset_t$, which can be queried at any point during the process.

\section{Theoretical Guarantees}
\label{sec:theory}
We establish the asymptotic correctness of TTPFTS under standard assumptions: finite arms, consistent posteriors, strict Pareto gaps, and sufficient exploration (see \Cref{sec:supp:theoretical} for formal Assumptions 1--4, \Cref{lem:infinite_exploration}, and the complete proof).

\begin{theorem}[Asymptotic Correctness]
    \label{theo:asympcorr}
    Under Assumptions 1--4 and \Cref{lem:infinite_exploration}, the TTPFTS algorithm's estimate of the Pareto set $\estparetoset_t$ is asymptotically correct:
    \[
    \lim_{t\to\infty} \mathbb{P}\big(\estparetoset_t \neq \paretoset) = 0.
    \]
\end{theorem}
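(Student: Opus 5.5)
The strategy is to reduce correctness of the recommendation to consistency of the posterior means, and then use the strict Pareto gaps to turn small estimation errors into the exact set. The assumptions are in \Cref{sec:supp:theoretical}; I expect them to say the following. Assumption 1: $K$ is finite. Assumption 2: the posterior is consistent, so the posterior mean $\hat{\boldsymbol{\theta}}_a^{(t)}$ converges in probability to $\boldsymbol{\theta}_a$ whenever the pull count $N_a(t)\to\infty$. Assumption 3: strict Pareto gaps, meaning no ties in the coordinates that decide domination. Assumption 4 ensures that enough exploration happens. \Cref{lem:infinite_exploration} gives $N_a(t)\to\infty$ (in probability, or almost surely) for every arm $a\in[K]$.

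\textbf{Step 1: a deterministic margin.} Using Assumption 3, I would define a margin $\Delta>0$ with the following property. If $\|\boldsymbol{\mu}_a-\boldsymbol{\theta}_a\|_\infty<\Delta/2$ for every $a$, then the Pareto set of $(\boldsymbol{\mu}_a)_a$ equals $\paretoset$. The construction has two cases.
\begin{itemize}
\item For each suboptimal $i\in\nonparset$, pick an optimal $j$ that strictly dominates it, $\boldsymbol{\theta}_i\prec\boldsymbol{\theta}_j$; the strict-gap assumption makes this possible. Set $\Delta_i=\min_d(\theta_j^d-\theta_i^d)>0$. Perturbing every coordinate by less than $\Delta_i/2$ keeps $i$ strictly dominated, so $i$ stays outside the estimated set.
\item For each pair with $i\in\paretoset$ and $j\neq i$, non-domination plus strict gaps gives a coordinate $d$ with $\theta_i^d-\theta_j^d>0$. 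Call the largest such gap $\Delta_{ij}$. A perturbation below $\Delta_{ij}/2$ keeps that strict inequality, so $j$ cannot dominate $i$.
\end{itemize}
Take $\Delta$ to be the minimum of all these finitely many positive numbers; finiteness of $K$ (Assumption 1) guarantees $\Delta>0$. This step is pure geometry and does not involve randomness.

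\textbf{Step 2: reduction to consistency.} By Step 1,
\[
\Prob(\estparetoset_t\neq\paretoset)\le \sum_{a=1}^K \Prob\bigl(\|\hat{\boldsymbol{\theta}}_a^{(t)}-\boldsymbol{\theta}_a\|_\infty\ge \Delta/2\bigr).
\]
Since $K$ is finite, it suffices to show that each term tends to $0$. For a fixed $a$ and an integer $M$, split the event according to whether $N_a(t)<M$. This gives
\[
\Prob\bigl(\|\hat{\boldsymbol{\theta}}_a^{(t)}-\boldsymbol{\theta}_a\|_\infty\ge\Delta/2\bigr)\le \Prob(N_a(t)<M)+\Prob\bigl(N_a(t)\ge M,\ \|\hat{\boldsymbol{\theta}}_a^{(t)}-\boldsymbol{\theta}_a\|_\infty\ge\Delta/2\bigr).
\]
The first term vanishes as $t\to\infty$ by \Cref{lem:infinite_exploration}. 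For the second term, I would use the usual device of viewing arm $a$'s observations as an i.i.d.\ sequence drawn in advance (a "stack" of rewards). Then $\hat{\boldsymbol{\theta}}_a^{(t)}$ is a function of the first $N_a(t)$ entries only. Posterior consistency (Assumption 2) then bounds the second term by $\Prob\bigl(\sup_{n\ge M}\|\hat{\boldsymbol{\theta}}_{a,n}-\boldsymbol{\theta}_a\|_\infty\ge\Delta/2\bigr)$, which tends to $0$ as $M\to\infty$ if consistency holds almost surely. Letting $t\to\infty$ and then $M\to\infty$ completes the proof.

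\textbf{Main obstacle.} The delicate point is the second term. The sample count $N_a(t)$ is random and adapted to the algorithm's choices, so a plain in-probability consistency statement for a fixed sample size $n$ is not enough. I would first try the stack-of-rewards argument, which needs almost-sure consistency, for example via the strong law of large numbers for conjugate posteriors. If Assumption 2 only provides convergence in probability, I would fall back to one of two options: invoke the assumption directly in its adaptive form (consistency along the history $\mathcal{H}^{(t)}$ on the event $N_a(t)\to\infty$), or use a maximal inequality to make the convergence uniform over $n\ge M$.
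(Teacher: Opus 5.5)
Your proof is correct. It uses the same ingredients as the paper's (posterior-mean consistency, strict separation margins, a union bound over finitely many arms), but it is organized differently and is more careful at one step.

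\textbf{Comparison with the paper's route.} The paper argues pairwise, in two halves. For soundness, it shows that each suboptimal $s$ is eventually strongly dominated in the estimate by its true strong dominator $a^*$. For completeness, it argues that no $j$ dominates an optimal arm in the limit. Each half gets its own union bound.

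You instead isolate a deterministic stability fact: a single margin $\Delta$ such that sup-norm errors below $\Delta/2$ force exact recovery. You then need only one union bound over per-arm deviation events, giving $\Prob(\estparetoset_t\neq\paretoset)\le\sum_a\Prob(\|\hat{\boldsymbol{\theta}}_a^{(t)}-\boldsymbol{\theta}_a\|_\infty\ge\Delta/2)$. This is cleaner and is also a natural entry point for finite-time rates. Your second bullet also correctly uses that exclusion from the estimated set needs only weak domination; the paper's completeness paragraph speaks of strong domination there.

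More importantly, the paper simply combines Assumption 2 with \Cref{lem:infinite_exploration} to assert almost sure convergence along the random, adaptively chosen counts $N_a(t)$. Your split on $\{N_a(t)<M\}$, together with the stack-of-rewards coupling, actually justifies this step. The coupling uses that each arm's posterior depends only on that arm's own data, which holds for TTPFTS's independent per-arm posteriors. Since the paper's Assumption 2 is stated with almost sure convergence, your first option suffices and the fallbacks are not needed.

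\textbf{Which assumption your second bullet needs.} In the paper, Assumption 3 only says that every suboptimal arm is strongly dominated by some optimal arm. Assumption 4 is not about exploration (that role is played by \Cref{lem:infinite_exploration}); it says that distinct optimal arms have distinct mean vectors. Your second bullet needs exactly Assumption 4. Non-domination of an optimal $i$ by $j$ still allows $\boldsymbol{\theta}_j=\boldsymbol{\theta}_i$. This is impossible for suboptimal $j$, since equal mean vectors have the same Pareto status. For optimal $j$, only Assumption 4 rules it out.

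The paper's completeness argument invokes Assumption 4 at the same point, and the assumption is not cosmetic. Two optimal arms with a common mean would typically land in each other's estimated dominance cone with non-vanishing probability.
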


\paragraph{Proof Sketch.} 
By posterior consistency and infinite exploration, the estimated means converge almost surely to the true means.  Consequently, the estimated dominance relationships eventually mirror the ground truth: for any suboptimal arm, its true dominator will eventually strongly dominate it in the estimate, ensuring its exclusion. Conversely, for any optimal arm, no other arm will strongly dominate it in the limit, guaranteeing its inclusion. A union bound over the finite set of arms confirms that the probability of any classification error, false positive or false negative, converges to zero. \hfill$\square$

\section{Empirical Evaluation}
\label{sec:emp_eval}
Next, we empirically evaluate the performance of the proposed TTPFTS algorithm on the PSI task. Our evaluation is conducted using the eight synthetic MOMAB PSI benchmarking environments introduced by \citeauthor{Kone2023BanditPS}, \citeyear{Kone2023BanditPS}. These environments are designed to capture a wide range of Pareto front geometries, sub-optimality gap profiles, and objective space dimensionalities commonly encountered in MOMAB problems. For each environment, we use Gaussian reward distributions with diagonal covariance matrix $\sigma^2 I_D$ with variance $\sigma^2=0.25$ and the identity matrix $I_D$ of order $D$. Specifications for each environment are provided in \Cref{sec:supp:env_specifications}.
 
\begin{figure*}[ht]
    \centering
    \includegraphics[width=1\linewidth]{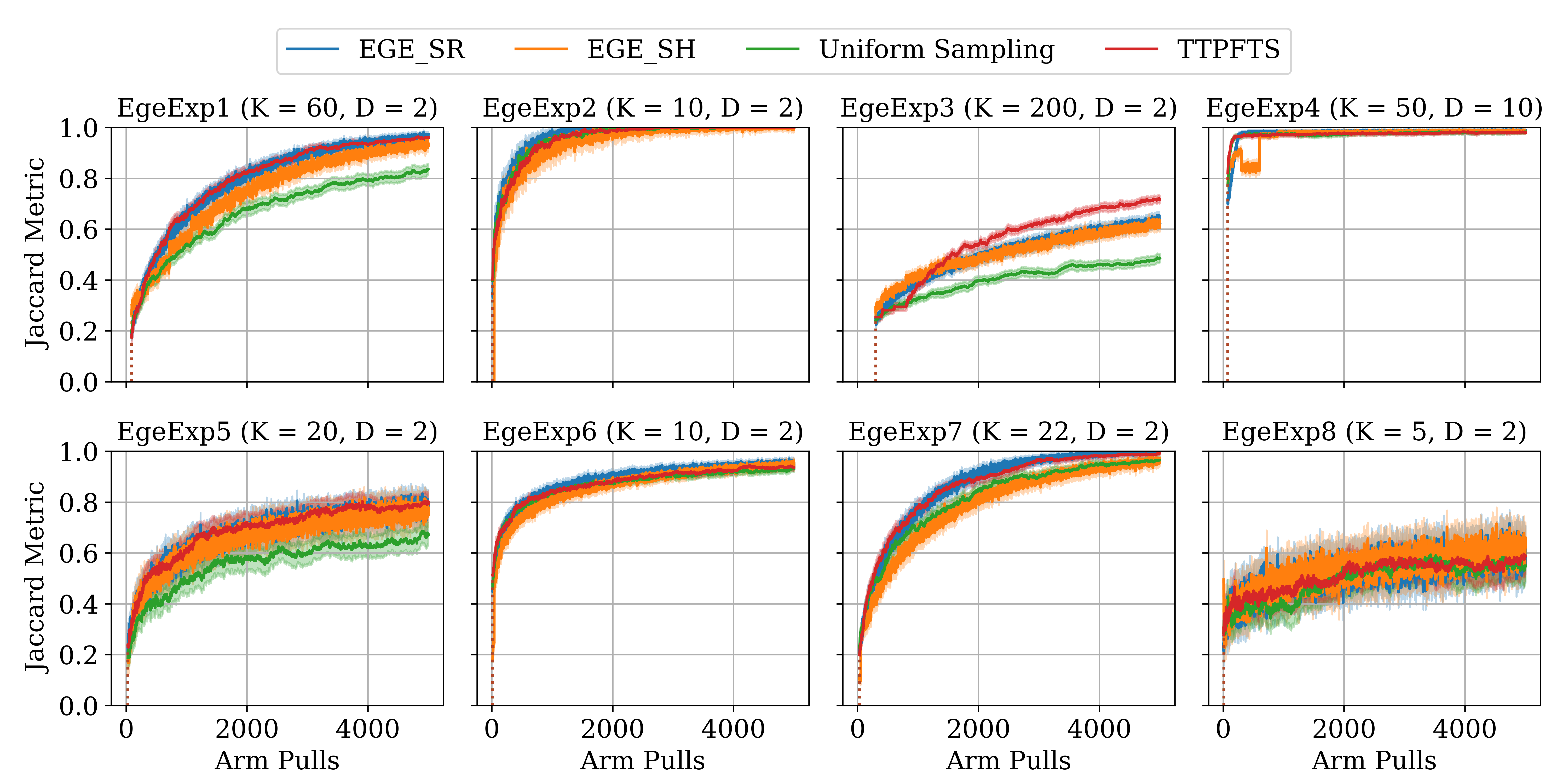}
    \caption{Performance of each of the MOMAB PSI algorithms with respect to the Jaccard metric in each of the eight synthetic benchmarking settings proposed in $\left[ \text{Kone \textit{et al}., 2023b}\right]$. We use Gaussian reward distributions with diagonal covariance matrix $\sigma^2 I_D$ with $\sigma^2=0.25$. Results are averaged over 100 experimental trajectories, with shaded areas indicating the 95\% confidence interval around the mean performance.}
    \label{fig:jaccard_all_envs}
\end{figure*}

We compare TTPFTS against both variants of the fixed-budget EGE algorithm proposed in~\cite{Kone2023BanditPS}: \textit{Successive Rejects} (EGE-SR) and \textit{Successive Halving} (EGE-SH). These represent the state-of-the-art for the fixed-budget MOMABs. In addition, we include a Uniform Sampling strategy as a baseline. As a Bayesian method, TTPFTS requires specifying a prior distribution over the rewards. To demonstrate robustness, we assume weak prior information about the environment by employing a uniform prior~\cite{Honda2013OptimalityOT}. For the likelihood, we assume a Gaussian reward model with unknown mean and unknown variance. This leads to a $t$-distributed posterior. We set $\rho=0.5$.

To quantify the quality of the estimated Pareto sets produced by each algorithm, we use the Jaccard metric $J$~\cite{Lu2019MultiObjectiveGL} between the estimated Pareto set $\estparetoset$ and the ground-truth Pareto set $\paretoset$:
\begin{equation}
    J(\paretoset,\estparetoset)=\frac{|\paretoset\cap\estparetoset|}{|\paretoset\cup\estparetoset|}
\end{equation}
For each environment, we report mean performance over 100 independent experimental trajectories, each consisting of 5{,}000 arm pulls. 

Due to the inherently distinct nature of the algorithms under comparison, fixed-budget versus anytime, it is important to clarify how a single experimental trajectory is generated for each method. The fixed-budget EGE algorithms are executed independently for every budget $T \in \left(0, 5000\right]$, producing an estimate $\estparetoset_T$ at the end of each run. In contrast, TTPFTS is executed only once per trajectory and outputs its anytime estimate $\estparetoset_t$ after each arm pull. Furthermore, it is crucial to recognize that fixed-budget methods are explicitly optimized to minimize the identification error after a predetermined budget $T$, allowing them to defer exploitation until late in the sampling process. Anytime algorithms, by contrast, must continuously balance exploration and exploitation, as they are required to provide meaningful recommendations at every time step. When examining the results, it is crucial to note that this structural constraint imposes a systematic penalty on anytime methods within fixed-budget comparisons.

\Cref{fig:jaccard_all_envs} reports the evolution of the Jaccard metric starting after K arm pulls, indicated by the dotted line, ensuring that all arms are sampled at least once to yield meaningful Pareto estimates. TTPFTS consistently outperforms the Uniform Sampling baseline and the EGE-SH algorithm across all environments. Furthermore, TTPFTS's scores are competitive with those of the EGE-SR algorithm across all environments. Notably, TTPFTS achieves parity with state-of-the-art algorithms explicitly optimized for fixed horizons, despite operating under the stricter anytime constraint. Most notably, in the EgeExp3 environment, which is characterized by a large number of arms, TTPFTS clearly outperforms both EGE variants. In this challenging setting, TTPFTS converges faster and achieves substantially higher Jaccard scores, suggesting that its posterior-driven allocation is particularly effective in large-scale problems where rigid gap-based arm elimination, as performed by EGE, may be premature. Finally, these results validate the algorithm's effectiveness even under weak prior assumptions. To provide a comprehensive assessment beyond the Jaccard metric, we include Bernoulli and misclassification metrics in \Cref{sec:supp:additional_PSI_metrics}. These metrics offer complementary insights into the probability of \textit{exact} PSI and the distribution of classification errors, respectively. The complete implementation of all empirical evaluation experiments is available on GitHub\footnote{\url{https://github.com/LennertSaerens/TTPFTS}}.

\section{Uncertainty Quantification}
\label{sec:uq}
Up to this stage, we have been using ground-truth dependent metrics to gain insight into the performance of MOMAB PSI algorithms. However, in real-world decision-making settings, the decision maker has no access to the true Pareto front, hence necessitating the PSI process. In this section, we define a measure of TTPFTS's uncertainty about its current estimate $\estparetoset_t$. The proposed uncertainty measure leverages TTPFTS's Bayesian nature by using the information captured in the posterior distributions of the top-two Pareto fronts. 

We quantify uncertainty $\UQ$ as the Bhattacharyya overlap~\cite{Bhattacharyya1943} between the posterior densities of the arms in the current top-two Pareto fronts~\cite{VanMolle2021}. Intuitively, high overlap indicates that the algorithm is struggling to distinguish optimal arms from suboptimal challengers, i.e.\ high uncertainty, while limited overlap signals a clear separation, i.e.\ high confidence. This concept is visualized in \Cref{fig:UQ}.

\begin{figure}[ht]
    \centering
    \includegraphics[width=\linewidth]{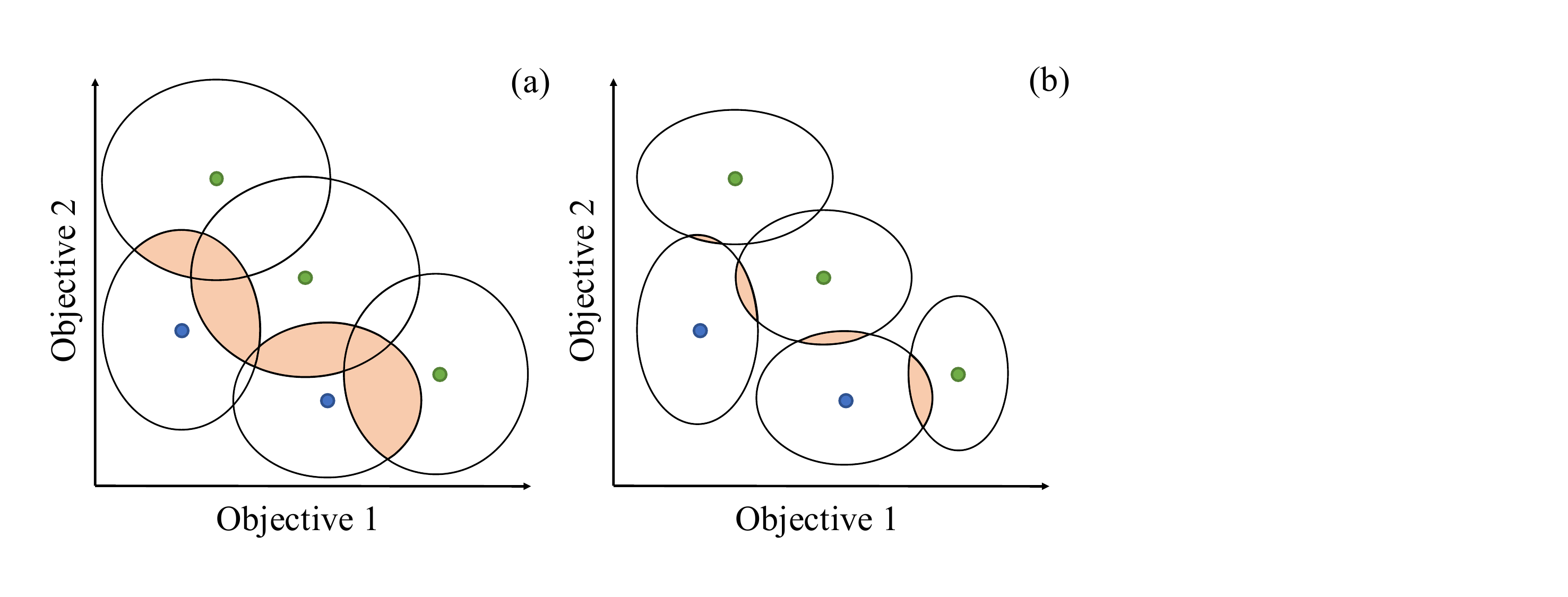}
    \caption{Visualization of proposed uncertainty quantification in the top-two Pareto fronts of a bi-objective maximization setting. The points indicate the mean of the arms' posterior distributions, while the ellipses around the means represent the posterior densities. As TTPFTS samples from the top-two fronts, the posteriors concentrate around the true means, evolving from a situation with high uncertainty, panel (a), to a situation with lower uncertainty, panel (b), indicated by the smaller shaded overlap in the probability density functions.}
    \label{fig:UQ}
\end{figure}

The Bhattacharyya coefficient $\beta$ lies in the interval $\left[0,1\right]$, where 1 indicates perfect overlap between distributions. Let $\estparetoset_1$ denote the estimated first Pareto front and $\estparetoset_2$ denote the the estimated second Pareto front, based on posterior means. \Cref{eq:UQ} shows the computation of our proposed uncertainty measure, detailed further in \Cref{sec:supp_uq_calc}.
\begin{equation}
    \label{eq:UQ}
    \UQ\left(\estparetoset_1,\estparetoset_2\right)=\frac{1}{\left| \estparetoset_1 \right|\left| \estparetoset_2 \right|}\sum_{a_i\in\estparetoset_1}\sum_{a_j\in\estparetoset_2}\Bhat(a_i,a_j)
\end{equation}
To derive a normalized, environment-independent quantification, we compute the coefficient for every pair of arms between the top-two fronts, summing these values and dividing by the total number of pairwise comparisons. As visualized in \Cref{fig:UQ}, this summation may involve double counting in dense regions of the objective space. Consequently, the resulting measure acts as a conservative estimate of total uncertainty. An evaluation justifying our choice of the Bhattacharyya coefficient over alternative information-theoretic metrics, such as KL-divergence and posterior entropy, is detailed in \Cref{sec:uq_comparison}.

\begin{figure}[ht]
    \centering
    \includegraphics[width=1\linewidth]{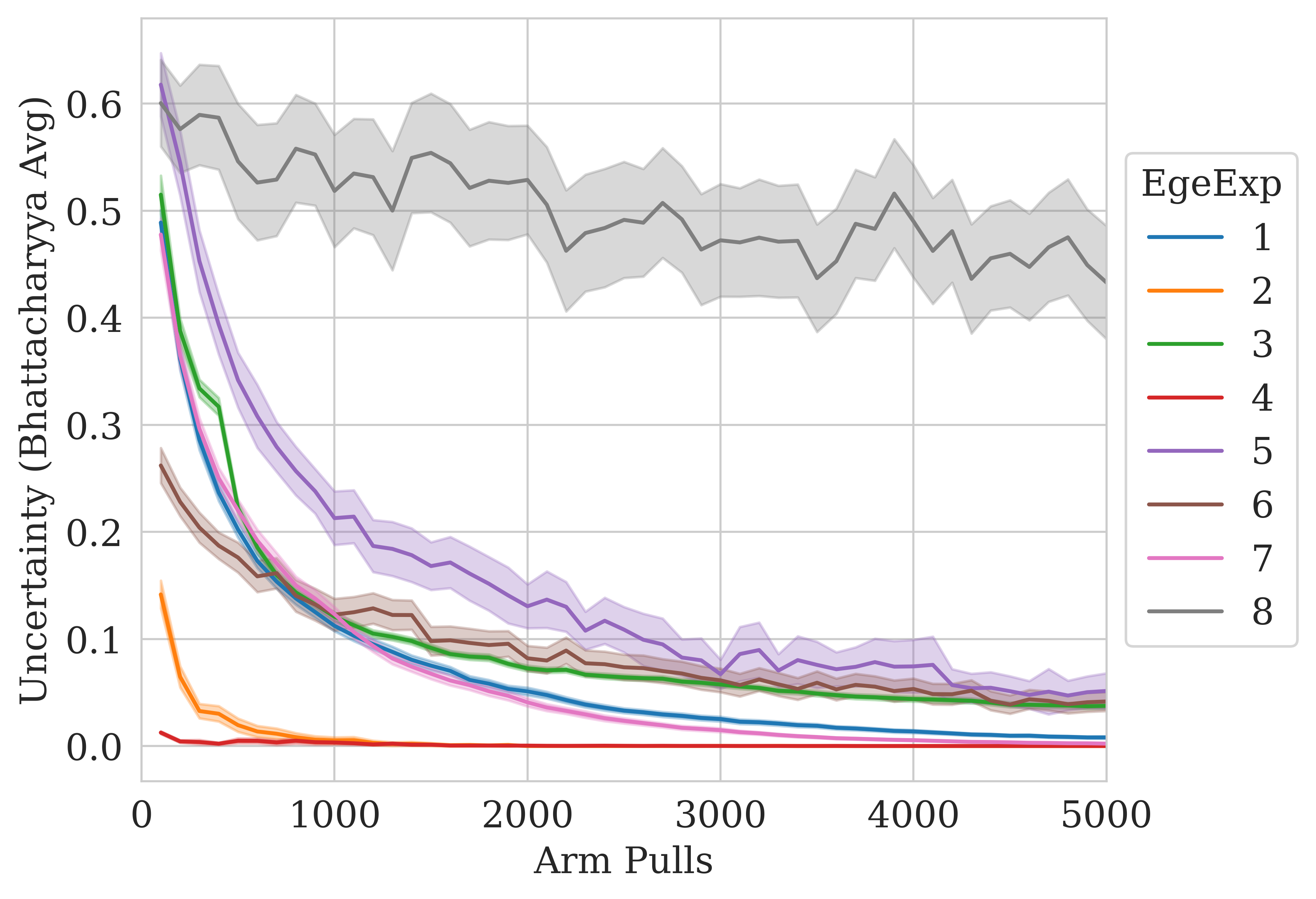}
    \caption{Evolution of the uncertainty of TTPFTS over 5000 arm pulls in each of the eight synthetic benchmarking settings proposed in $\left[ \text{Kone \textit{et al}., 2023b}\right]$. Results are averaged over 100 experimental trajectories, with shaded areas indicating the 95\% confidence interval around the mean uncertainty.}
    \label{fig:uncertainty_quantification_all_envs_singleplot}
\end{figure}

We first analyze the temporal evolution of this metric across the benchmarks from \Cref{sec:emp_eval}. As shown in \Cref{fig:uncertainty_quantification_all_envs_singleplot}, the uncertainty decreases as the learner gathers evidence. An exponentially decaying trend is evident in most environments. A notable exception is EgeExp8, characterized by a unique optimal arm and geometrically vanishing sub-optimality gaps, which exhibits a linear decline. Notably, these trends align with the performance curves in \Cref{fig:jaccard_all_envs}: environments that are harder to learn, exhibiting a slower Jaccard improvement, display a correspondingly slower reduction in uncertainty.

To formally validate this metric as a proxy for estimation quality, we examine the correlation between the proposed uncertainty measure and the ground-truth Jaccard metric. For each environment, \Cref{fig:UQ_Jaccard_Corr_boxplot} presents boxplots of the Pearson correlation coefficients calculated over 100 independent trajectories, each consisting of 5000 arm pulls.

\begin{figure}[ht]
    \centering
    \includegraphics[width=1\linewidth]{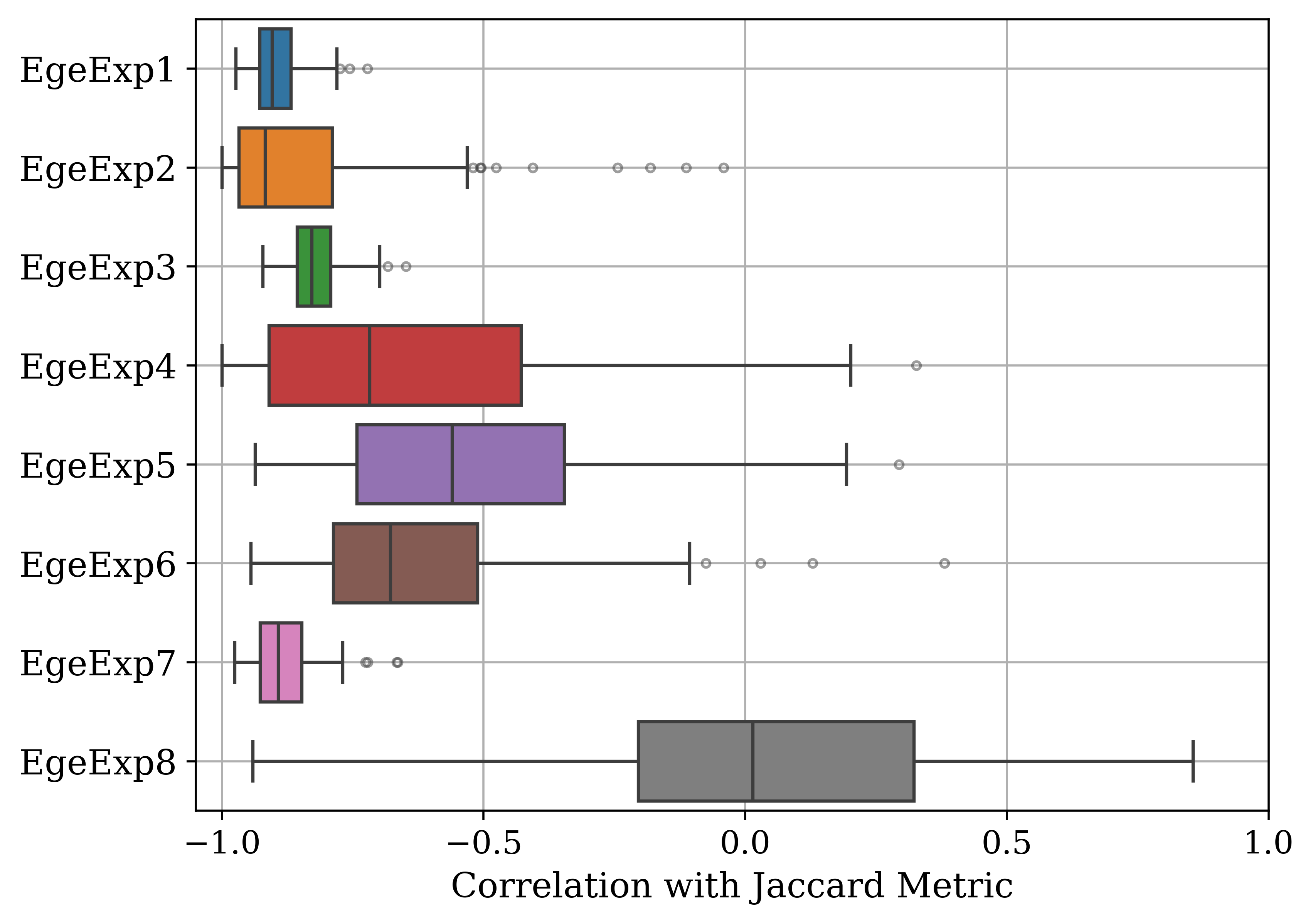}
    \caption{Distribution of Pearson correlation coefficients between the Jaccard metric and the proposed uncertainty quantification over 100 trajectories consisting of 5000 arm pulls. A strong negative correlation indicates that the uncertainty metric is a reliable real-time proxy for identification performance.}
    \label{fig:UQ_Jaccard_Corr_boxplot}
\end{figure}

The results reveal a strong negative correlation for the majority of environments. This confirms that a decrease in our uncertainty metric reliably maps to an increase in the accuracy of the Pareto set estimate. Even in environments with higher variance such as EgeExp4 and EgeExp5, the correlation remains predominantly negative. The outlier is EgeExp8, where the correlation is centered near zero with high variance. This decoupling of uncertainty and performance is consistent with the structural difficulty of EgeExp8 seen in \Cref{fig:jaccard_all_envs}, where the algorithm struggles to form a stable estimate. However, for all solvable environments, the proposed metric serves as an effective, ground-truth-independent indication of the uncertainty for the decision-maker.

Finally, we note that this uncertainty quantification creates a powerful synergy with the anytime nature of TTPFTS. Unlike fixed-budget algorithms, where the stopping time is rigid and predetermined, TTPFTS empowers the decision-maker to stop the process dynamically. By monitoring the uncertainty in real-time, decision-makers can decide when the algorithm has reached a low enough uncertainty, where the top-two fronts have sufficiently separated, and terminate the experiment. This transforms TTPFTS from a sampling strategy towards a decision support system, minimizing resource expenditure and maximally informing the decision-maker.

\section{Efficiently Screening Molecular Databases}
Moving beyond synthetic benchmarking, we apply TTPFTS to the exploration of \textit{ultra-large synthesis-on-demand} molecular libraries~\cite{WaltersMols2024}. Our goal is to identify molecules that optimally trade off multiple desirable properties by efficiently exploring chemical spaces where exhaustive enumeration is infeasible. This experiment validates TTPFTS's applicability to complex, real-world decision-making tasks.

Virtual screening has become a cornerstone of hit identification, the process of finding promising candidate molecules for drug development. A key innovation in this field is the advent of synthesis-on-demand libraries. These libraries are constructed combinatorially, akin to a Cartesian product: a vast space of product molecules is defined by reacting all valid combinations from smaller, discrete sets of molecular building blocks, called reagents~\cite{GRYGORENKO2020101681}. As these libraries have expanded to contain billions of candidates, exhaustive screening has become computationally infeasible. In response, numerous methodologies have been developed to expedite the identification of promising compounds. Notably, \cite{WaltersMols2024} introduced a state-of-the-art bandit-based approach where independent Thompson Sampling agents simultaneously select the specific reagents required for each component of the chemical reaction. Their method achieved a $1000\times$ speedup over exhaustive virtual screening, which remains the standard industry approach despite its high computational cost~\cite{Lyu2019UltraLarge,WaltersMols2024}. However, their work was limited to optimizing a single molecular property. We demonstrate that TTPFTS naturally extends this efficient screening paradigm to the multi-objective optimization of possibly conflicting molecular properties.

We adopt the experimental framework of \cite{WaltersMols2024}, utilizing a combinatorial library generated via a three-component reaction. The reagent pools for the three components consist of 376, 500, and 500 unique reagents, respectively, yielding a fully enumerated library of 94 million compounds~\cite{WaltersMols2024}. In the bandit formulation, this corresponds to three independent agents, i.e.\ one per component, where the arms represent the available reagents. The reward signal is derived from the evaluation of the final product resulting from the combination of the bandit-selected reagents.

To transition to the MOMAB PSI framework, we introduce two key modifications. First, we expand the optimization landscape by introducing a second objective. In addition to maximizing structural similarity to a query molecule, we simultaneously optimize for the \textit{LogP}, a key molecular property in drug discovery, representing a compound's lipohilicity, which influences properties such as absorption, distribution and membrane permeability~\cite{LIPINSKI19973,Waring2010}. This bi-objective setup mirrors early stage screening in drug discovery, such as for Alzheimer's disease~\cite{GOTE2023101334}. Secondly, we deploy TTPFTS agents for each of the three reagent-selecting bandits instead of the standard Thompson Sampling agents. This algorithmic shift transforms the system from a best-arm identifier into a Pareto-front explorer. Notably, we exclude fixed-budget algorithms like EGE from this multi-agent combinatorial setting due to significant structural incompatibilities. EGE variants rely on rigid phases requiring specific budget allocations across active arms before eliminations can occur. In a synthesis-on-demand loop, where three independent bandits must synchronously propose reagents to form a single valid product, coordinating these phased requirements becomes computationally intractable. Specifically, aligning the distinct active subsets from multiple agents to form valid reaction products introduces a secondary combinatorial optimization problem. TTPFTS avoids this synchronization bottleneck by naturally fitting the system's sequential, single-pull paradigm.

To establish a ground truth, we perform an exhaustive search across the full library of 94 million molecules. This comprehensive screen reveals a true Pareto front consisting of 52 molecules, which are visualized in \Cref{fig:Quinazoline_RandomTTPFTSExhaustive_PF_Run1}.

\begin{figure}[ht]
    \centering
    \includegraphics[width=0.8\linewidth]{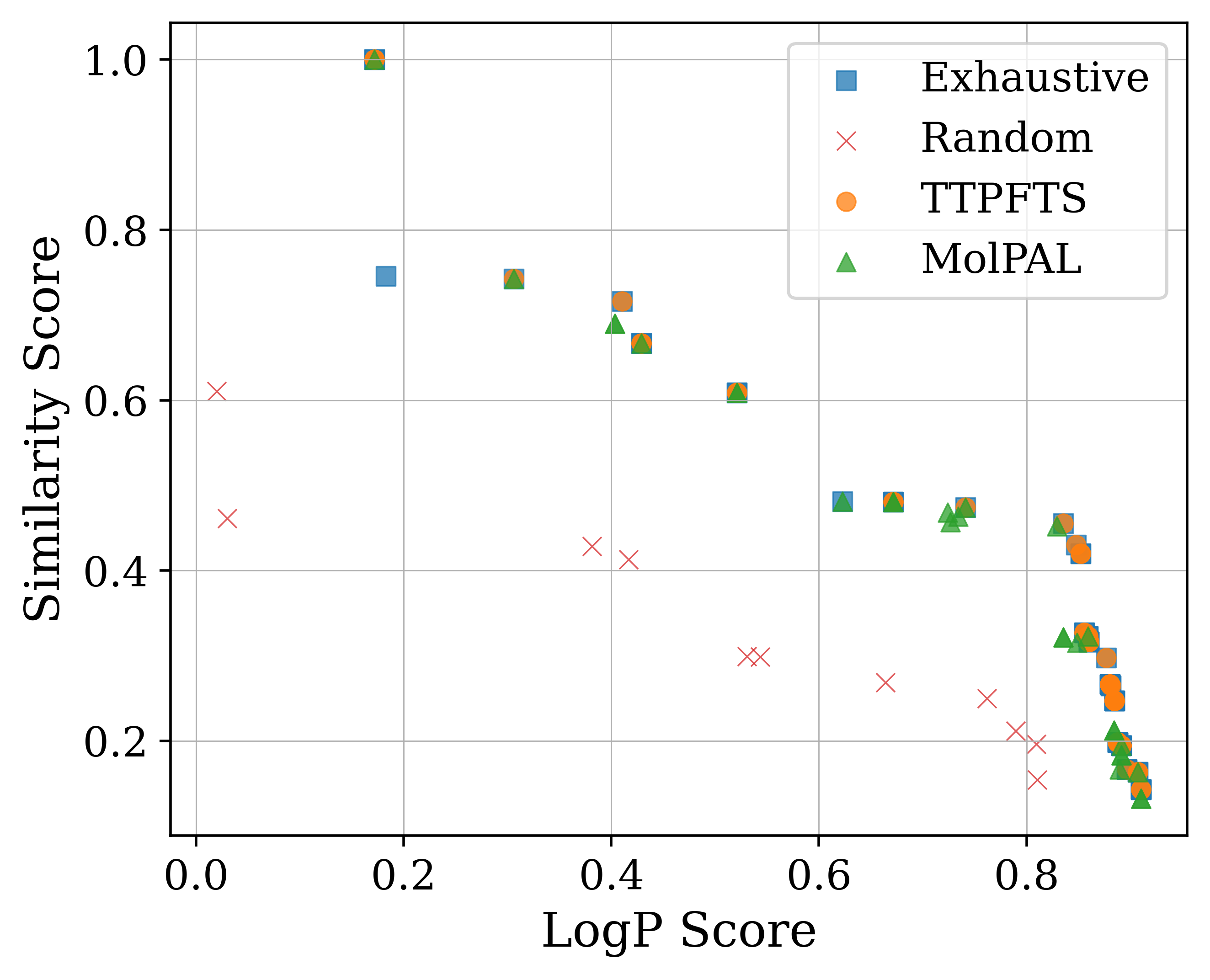}
    \caption{Structural Similarity and LogP scores for the optimal molecules identified in a single run by Random Search, TTPFTS, and MolPAL, compared to the ground-truth Pareto-optimal molecules obtained through exhaustive virtual screening of all 94 million candidates.}
    \label{fig:Quinazoline_RandomTTPFTSExhaustive_PF_Run1}
\end{figure}

Following the protocol in~\cite{WaltersMols2024}, we employ a Random Search strategy as a baseline. This strategy establishes the expected performance of the industry-standard exhaustive virtual screening approach. We also include the state-of-the-art MolPAL baseline \cite{FromerMolecules2024} (configuration in \Cref{sec:supp:molpal_config}). All three methods are halted after 50,000 steps. \Cref{fig:Quinazoline_RandomTTPFTSExhaustive_PF_Run1} illustrates the outcomes of a representative run. The figure compares the molecules identified by each method to the ground-truth Pareto-optimal set. The results show that, apart from two molecules, the molecules discovered by TTPFTS perfectly coincide with the true Pareto front, despite evaluating only $0.05\%$ of the entire database. In contrast, Random Search fails to recover any of the true Pareto-optimal molecules within the same budget. These findings highlight TTPFTS’s ability to efficiently balance exploration and exploitation in a vast chemical space and to rapidly concentrate sampling on promising regions. Additional runs exhibiting similar behavior are provided in \Cref{sec:supp:add_results_mols}.

To quantify the average performance over time, \Cref{fig:Quinazoline_RandomTTPFTS_Jaccard_Uncertainty} reports the Jaccard metric between the estimated and true Pareto sets alongside the evolution of the proposed uncertainty measure. Results are averaged over 100 runs, with shaded regions indicating $95\%$ confidence intervals. The computation of the Jaccard and uncertainty quantification metrics within this multi-agent setting is discussed in \Cref{sec:supp:mols_psi_uq_metrics}.

\begin{figure}[ht]
    \centering
    \includegraphics[width=\linewidth]{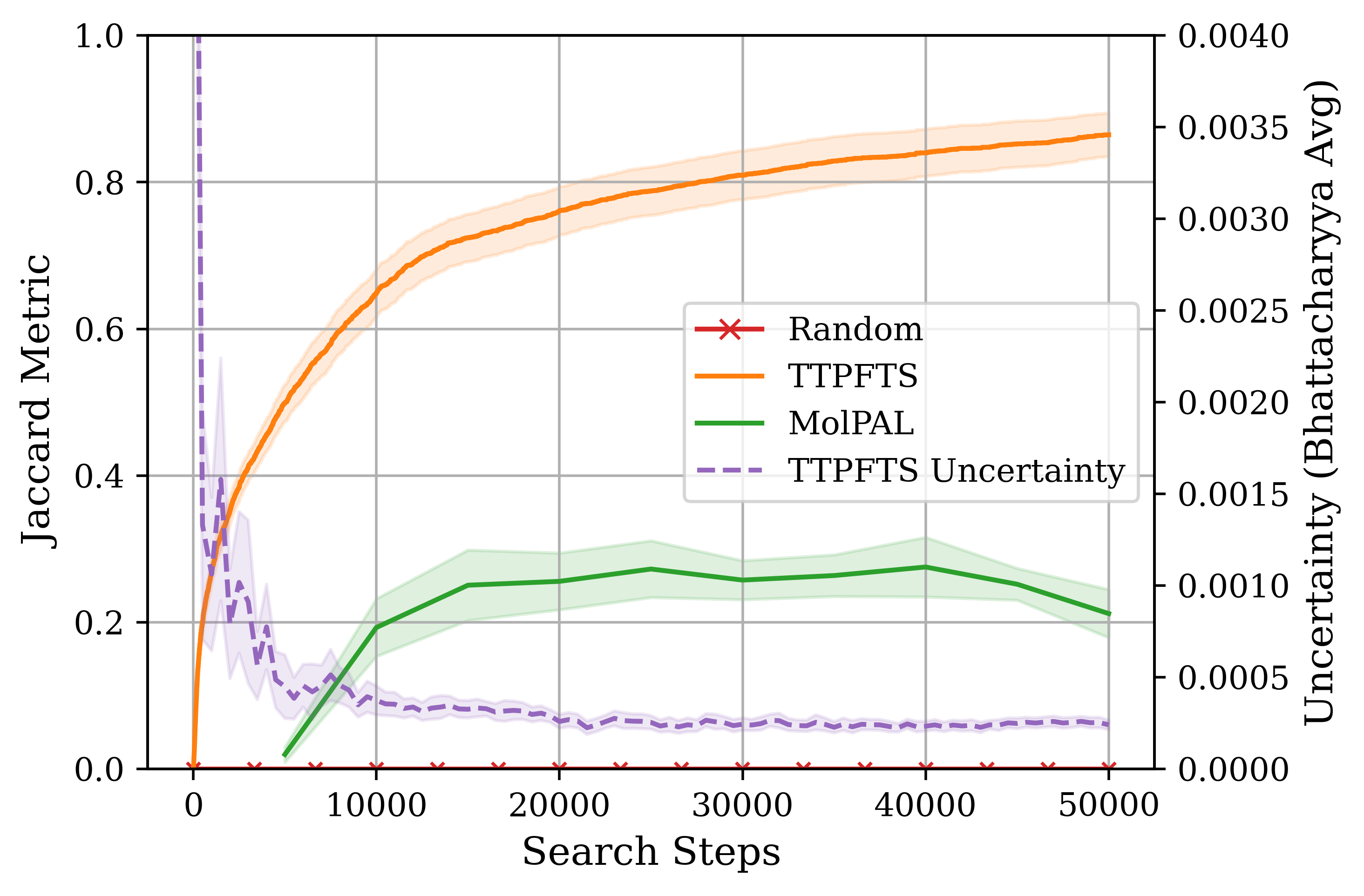}
    \caption{Evolution of the Jaccard metric of the optimal molecules identified by Random Search, TTPFTS, and MolPAL, as well as the proposed uncertainty measure for TTPFTS. Results are averaged over 100 experimental trajectories, with the shaded areas indicating the 95\% confidence interval around the mean.}
    \label{fig:Quinazoline_RandomTTPFTS_Jaccard_Uncertainty}
\end{figure}

Across all runs, Random Search consistently fails to identify any of the Pareto-optimal molecules. MolPAL reaches a mean Jaccard score of $0.28$ at 40{,}000 oracle calls before declining due to exploitation bias in later iterations. In contrast, TTPFTS reliably obtains scores exceeding $0.8$ within approximately 30{,}000 search steps, confirming the algorithm’s ability to rapidly identify optimal trade-offs. Crucially, the figure reveals a strong inverse relationship between the Jaccard score and the uncertainty measure $\UQ$. As the estimation quality improves, the uncertainty exhibits a sharp, consistent decline. This validates the metric as a reliable proxy for convergence even in this complex chemical space. This finding highlights a powerful synergy between the anytime nature of TTPFTS and our ground-truth-independent uncertainty quantification. Unlike rigid fixed-budget approaches, this combination provides the drug discovery team with a responsive decision-support system. Drug discovery scientists can monitor the uncertainty in real-time and dynamically determine when the Pareto front is sufficiently resolved to terminate the screening, thereby optimizing the trade-off between computational resources and discovery confidence without requiring prior knowledge of the true Pareto front.

\section{Discussion}
We introduced \textit{Top-Two Pareto Front Thompson Sampling} (TTPFTS), the first algorithm designed for the anytime MOMAB PSI setting. TTPFTS' anytime nature provides a more flexible alternative to the fixed-budget and fixed-confidence regimes. Our results demonstrate that this flexibility comes at no cost to performance: TTPFTS matches, and in large-action spaces outperforms, state-of-the-art fixed-budget baselines. Crucially, we enable the immediate practical use of this anytime property through a novel uncertainty quantification metric. This provides decision-makers with an interpretable, real-time methodology for monitoring progress which functions without ground-truth knowledge.

Beyond synthetic benchmarks, the combined practical power of these contributions was validated through a challenging multi-objective molecular discovery setting. TTPFTS efficiently navigated the trade-off between conflicting molecular properties, recovering the true Pareto front while exploring less than $0.05\%$ of the search space, and outperforming both the industry-standard approach and the state-of-the-art MolPAL method. Furthermore, the uncertainty metric accurately tracked this progress, effectively acting as a dynamic stopping criterion for the molecular discovery process.

Finally, we acknowledge several limitations, discussed in \Cref{sec:limitations}, which present interesting avenues for future research. First, while Theorem~\ref{theo:asympcorr} guarantees the asymptotic correctness of TTPFTS, deriving explicit finite-time error bounds remains a non-trivial challenge. Historically, finite-time guarantees for both standard~\cite{Thompson1933,AgrawalGoyal2013} and Top-Two Thompson Sampling~\cite{pmlr-v49-russo16,Shang2019FixedConfidenceGF} took years to emerge following their asymptotic proofs. We therefore focus on establishing soundness and immediate practical value through our asymptotic proof, strong empirical performance against state-of-the-art baselines, and a real-time uncertainty metric that tracks convergence. Despite these open challenges discussed in \Cref{sec:limitations}, the successful deployment of TTPFTS in a complex drug discovery application demonstrates its immediate utility and establishes it as a robust advancement in multi-objective decision-making.

\section*{Acknowledgments}
L.S. is supported by an imec doctoral grant. B.S. acknowledges FWO grant G059423N and OZR-VUB (OZR3863BOF). P.J.K.L. gratefully acknowledges support from the Research council of the Vrije Universiteit Brussel (OZR-VUB via grant number OZR3863BOF), FWO grant G059423N, FWO grant G0A0S26N, and FWO grant S002626N. This research acknowledges funding from the Flemish Government through the AI Research Program. The computational resources and services used in this work were provided by the VSC (Flemish Supercomputer Center), funded by the Research Foundation Flanders (FWO) and the Flemish Government department EWI. This research is partly funded by the $\text{imec}.\text{prospect}$ project IMPACT.

\bibliographystyle{plainnat}
\bibliography{sources}

\setcounter{section}{0}
\renewcommand{\thesection}{S\arabic{section}}

\onecolumn

\section{Empirical Evaluation}
\label{sec:supp:empirical_eval}
We empirically evaluate the performance of the proposed Top-Two Pareto Front Thompson Sampling (TTPFTS) algorithm on the Pareto Set Identification (PSI) task. Our evaluation is conducted using the eight synthetic Multi-objective Multi-armed Bandit (MOMAB) PSI benchmarking environments introduced in~\cite{Kone2023BanditPS}. These environments are designed to capture a wide range of structural properties and difficulty levels commonly encountered in MOMAB problems. For each environment, we use Gaussian reward distributions with diagonal covariance matrix $\sigma^2 I_D$ with variance $\sigma^2=0.25$. \\

In this section, we provide detailed specifications for each of the eight synthetic benchmarking environments. \textbf{Note:} To ensure completeness and reproducibility, the definitions and parameters listed below are adopted directly from the original specifications provided by~\cite{Kone2023BanditPS}.

\begin{figure}[htbp]
    \centering
    \begin{subfigure}{0.5\linewidth}
        \centering
        \includegraphics[width=\linewidth]{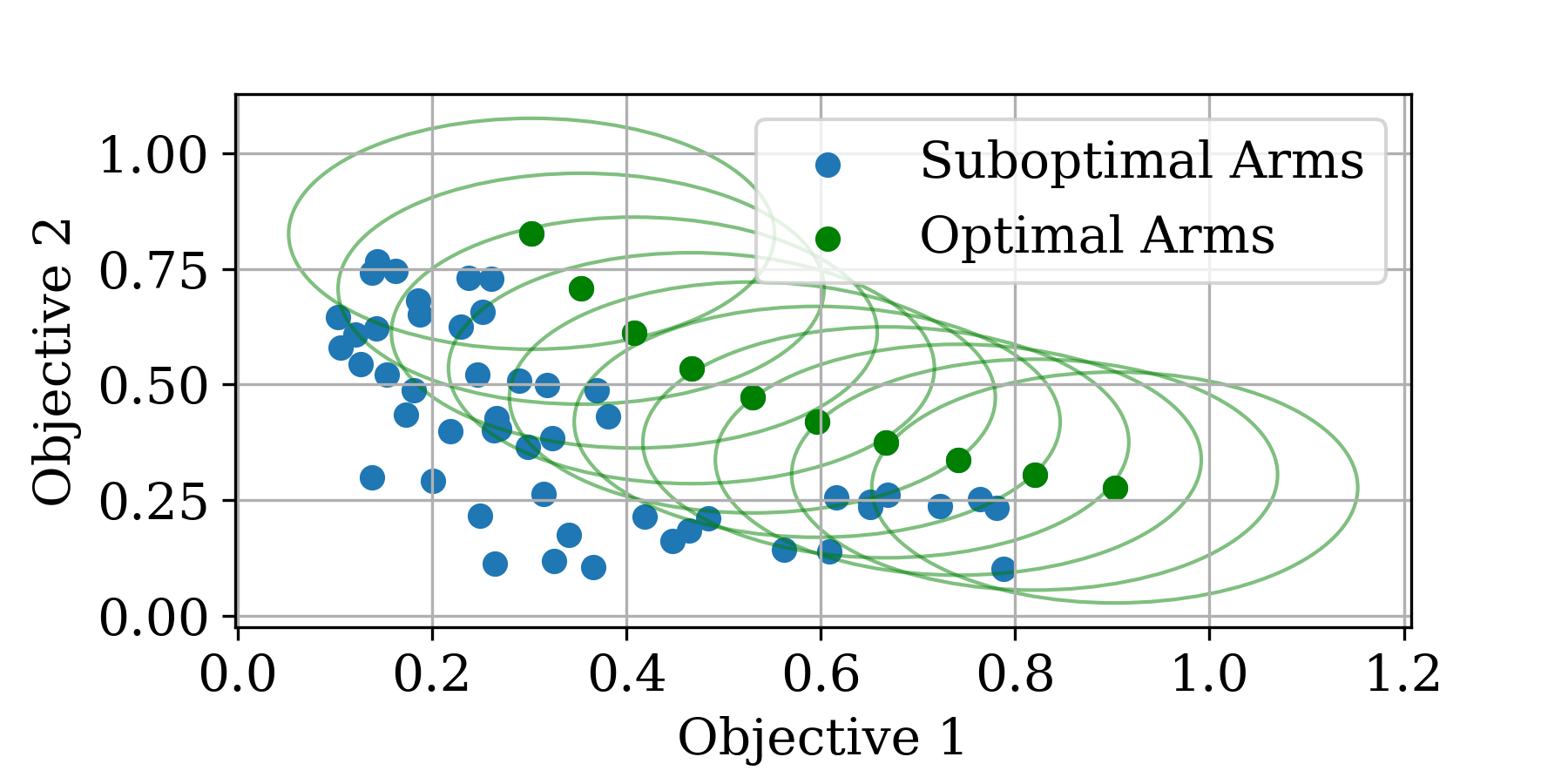}
        \caption{EgeExp1}
    \end{subfigure}\hfill
    \begin{subfigure}{0.5\linewidth}
        \centering
        \includegraphics[width=\linewidth]{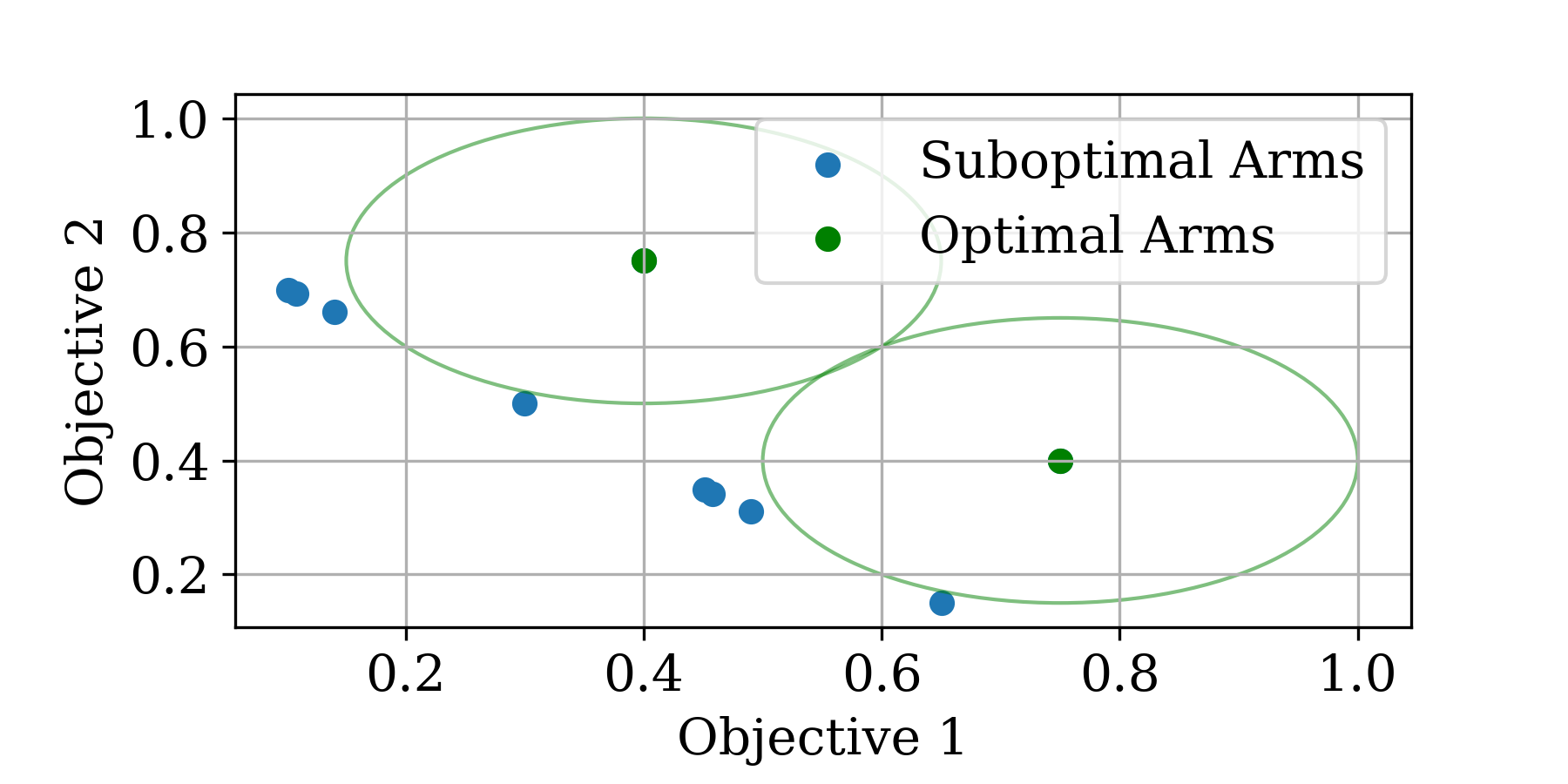}
        \caption{EgeExp2}
    \end{subfigure}

    \begin{subfigure}{0.5\linewidth}
        \centering
        \includegraphics[width=\linewidth]{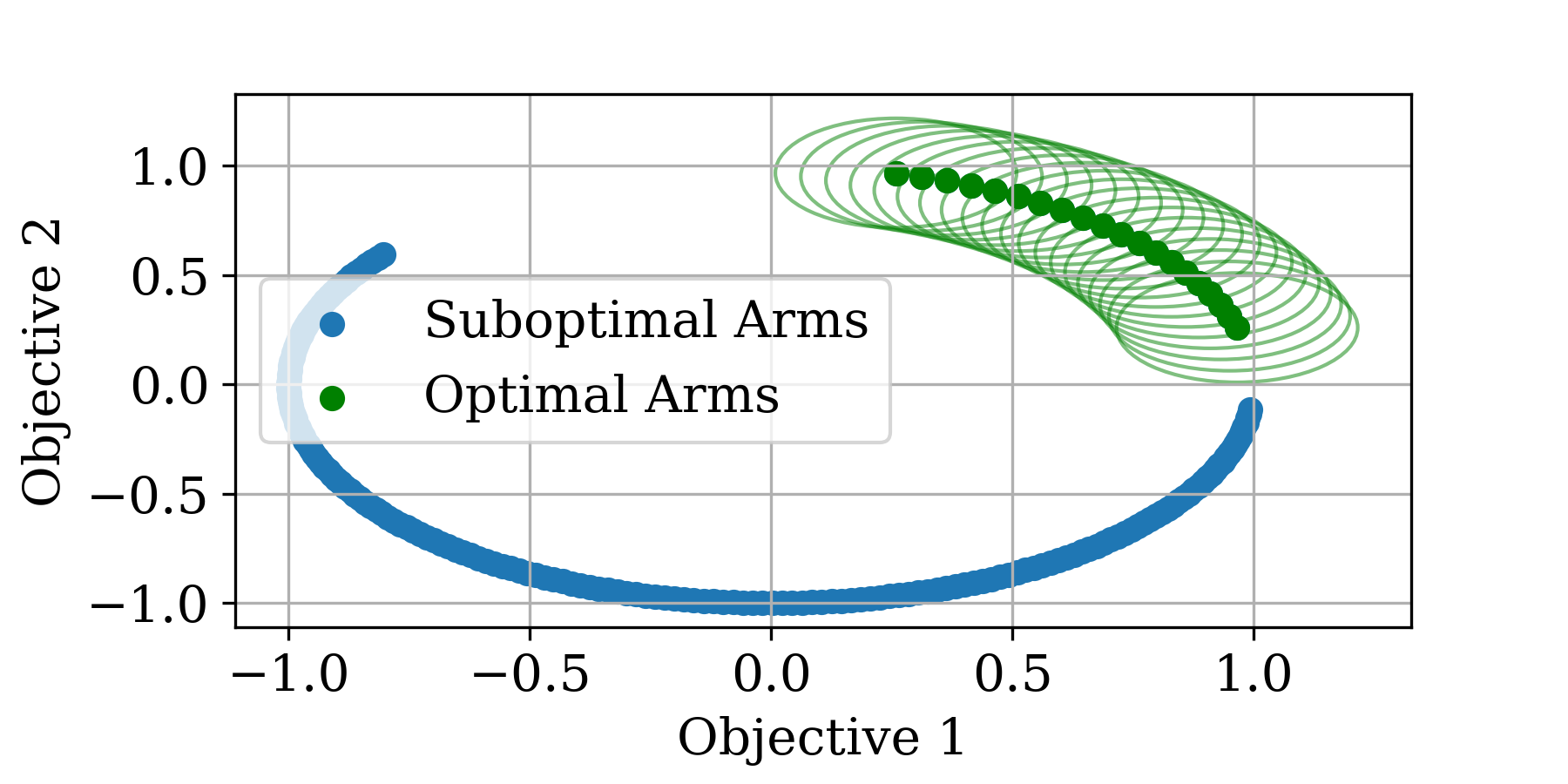}
        \caption{EgeExp3}
    \end{subfigure}\hfill
    \begin{subfigure}{0.5\linewidth}
        \centering
        \includegraphics[width=\linewidth]{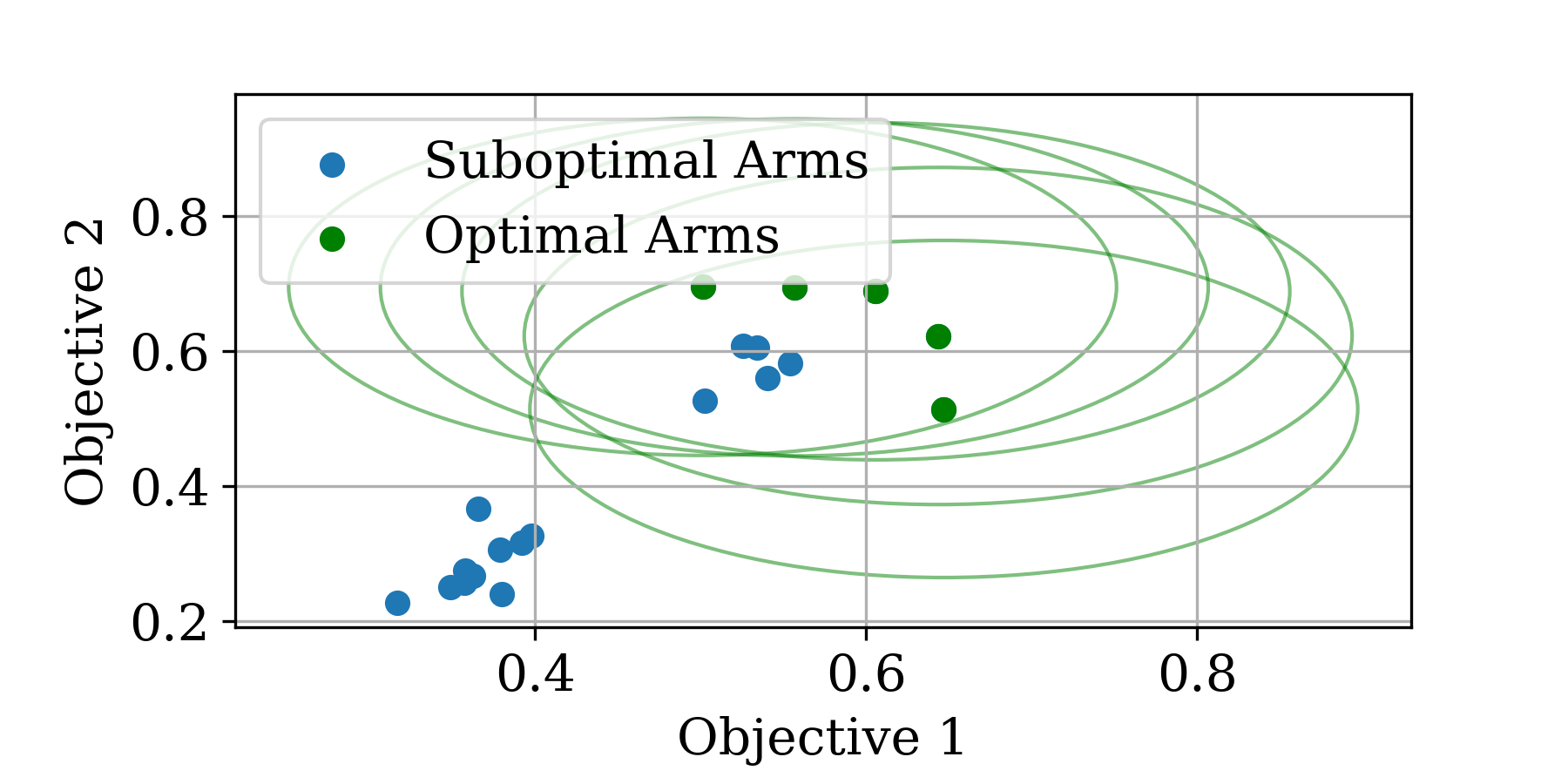}
        \caption{EgeExp5}
    \end{subfigure}

    \begin{subfigure}{0.5\linewidth}
        \centering
        \includegraphics[width=\linewidth]{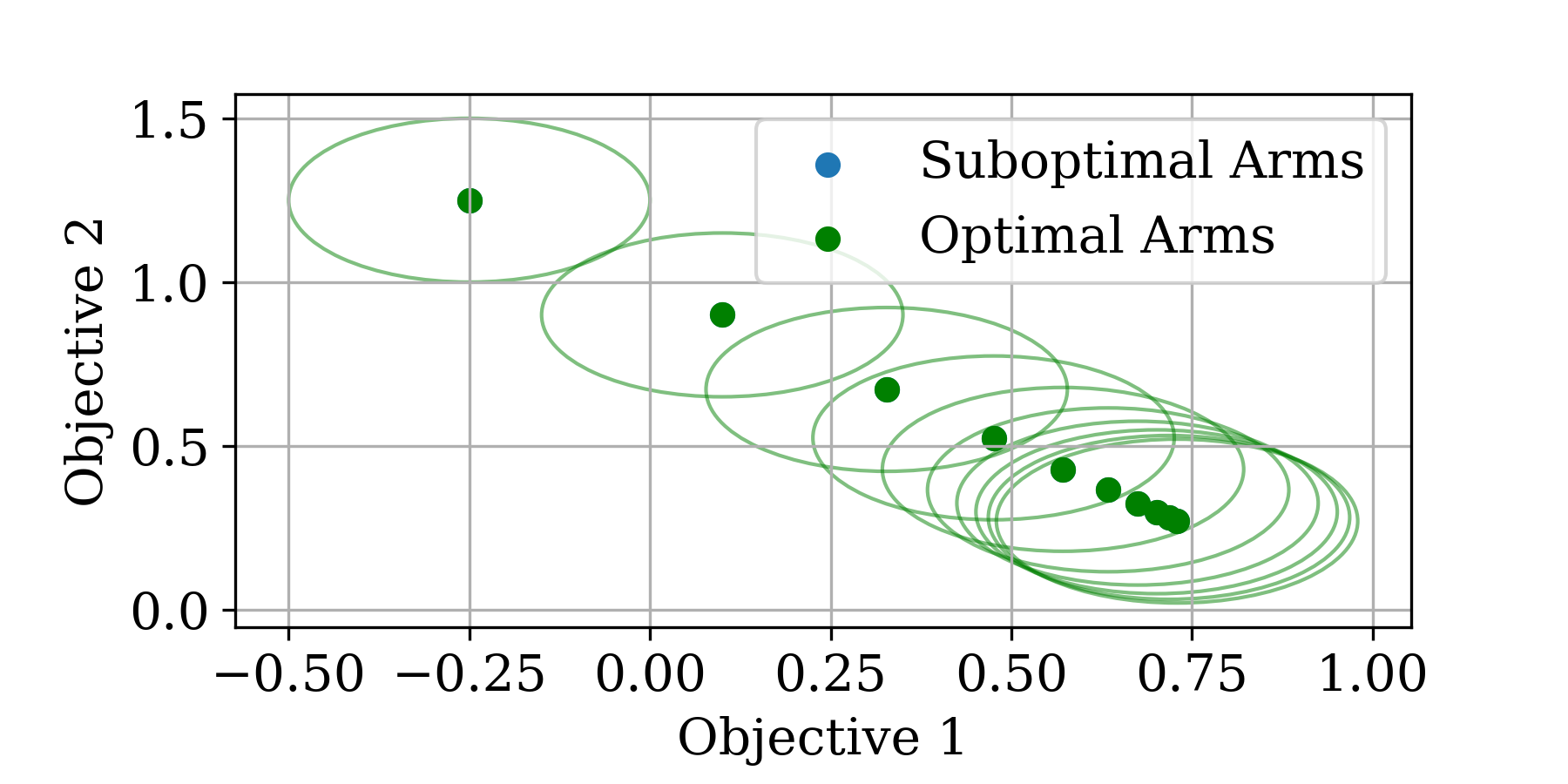}
        \caption{EgeExp6}
    \end{subfigure}\hfill
    \begin{subfigure}{0.5\linewidth}
        \centering
        \includegraphics[width=\linewidth]{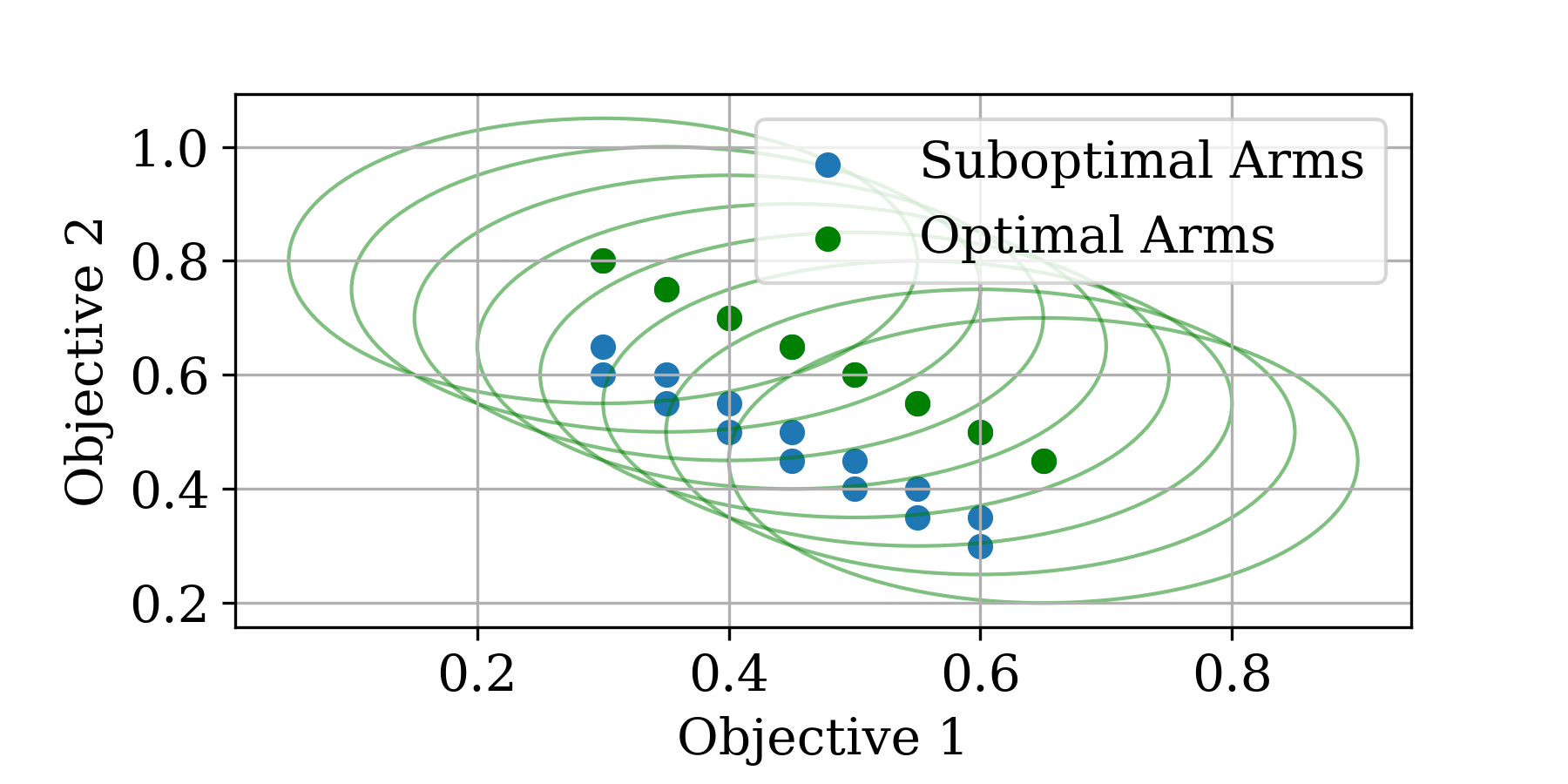}
        \caption{EgeExp7}
    \end{subfigure}

    \begin{subfigure}{0.5\linewidth}
        \centering
        \includegraphics[width=\linewidth]{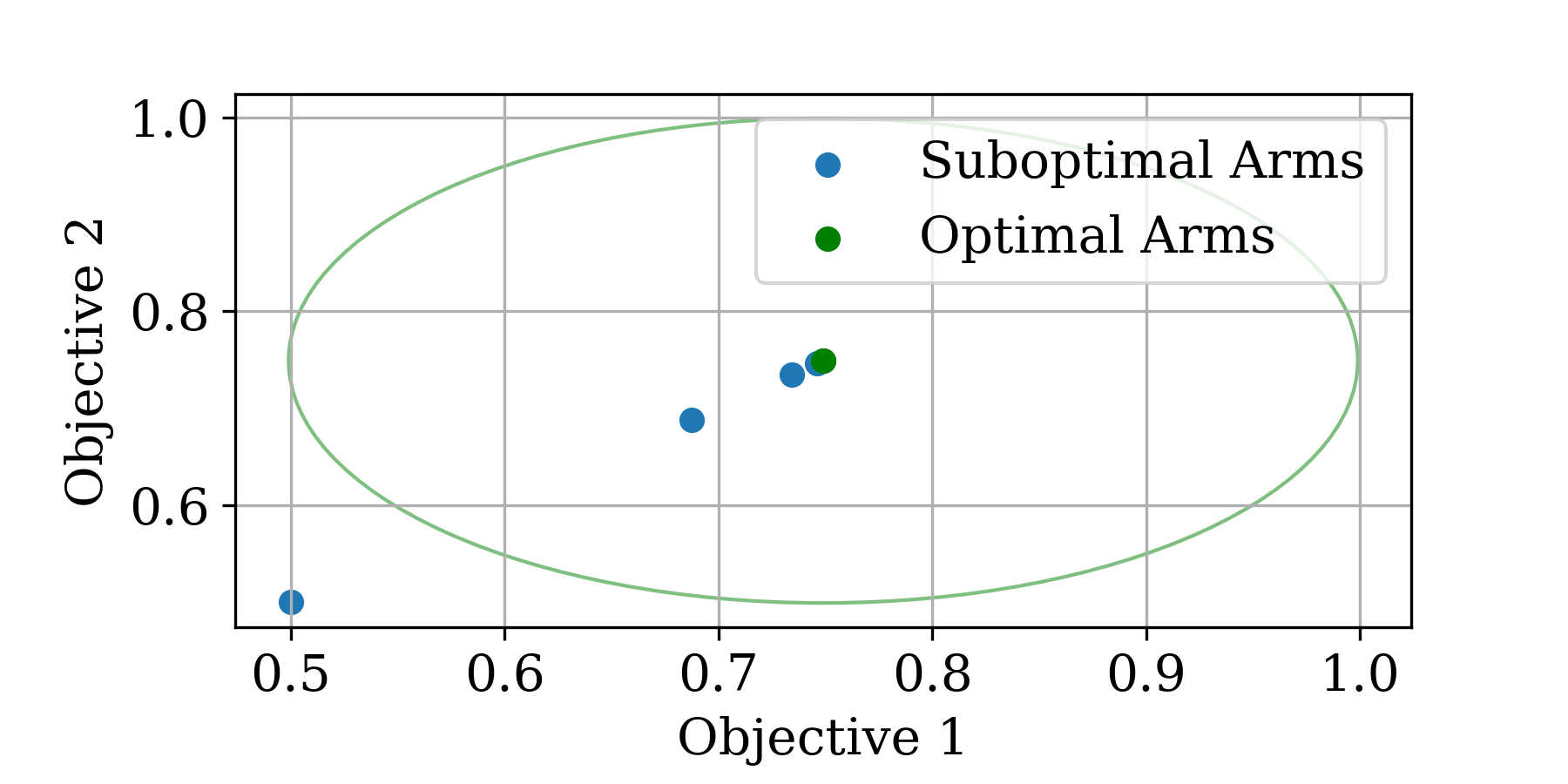}
        \caption{EgeExp8}
    \end{subfigure}

    \caption{Visualizations of the bi-objective synthetic maximization benchmarking environments proposed in $\left[ \text{Kone \textit{et al}., 2023}\right]$. We use Gaussian reward distributions with diagonal covariance matrix $\sigma^2 I_D$ with $\sigma^2=0.25$. For better readability, we visualize $\sigma^2$ as an ellipse around the mean only for the optimal arms. EgeExp4 cannot be visualized in the same way due to its high-dimensional nature ($D=10$).}
    \label{fig:EgeExpEnvs}
\end{figure}

\subsection{Detailed Synthetic Benchmarking Environment Specifications}
\label{sec:supp:env_specifications}

\paragraph{EgeExp 1: Arms on a convex Pareto set.}
This environment features $K=60$ arms and $D=2$ objectives. The first 20 parameters $x_1, \dots, x_{20}$ are chosen to be equally spaced in the interval $[0.55, 0.95]$. For arms $i=1, \dots, 10$, the means are defined as $\boldsymbol{\theta}_i := (x_i^2, 1/(4x_i^2))^\intercal$. The remaining arms $\boldsymbol{\theta}_{11}, \dots, \boldsymbol{\theta}_{60}$ are chosen from the set $\{(x, y) \in [0.1, 0.8]^2 : xy \leq 1/5\}$.

\paragraph{EgeExp 2: Group of sub-optimal arms with unique optimal dominators.}
Here, $K=10$ and $D=2$, with a Pareto set size of $|\paretoset|=2$. The structure dictates that for each sub-optimal arm $i$, there is a unique optimal arm $j$ such that $\boldsymbol{\theta}_i \prec \boldsymbol{\theta}_j$. The optimal arms are set as $\boldsymbol{\theta}_1 := (0.4, 0.75)^\intercal$ and $\boldsymbol{\theta}_2 := (0.75, 0.4)^\intercal$. For $i=1, \dots, 4$, the sub-optimal arms are defined as $\boldsymbol{\theta}_{2i+1} := (0.45 + 0.2^i, 0.35 - 0.2^i)^\intercal$ and $\boldsymbol{\theta}_{2i+2} := (0.10 + 0.20^i, 0.70 - 0.20^i)^\intercal$.

\paragraph{EgeExp 3: Many arms on the unit circle.}
This setting includes a large number of arms, $K=200$, with $D=2$. The instance is generated as an isotropic multivariate normal. Angles $\beta_1, \dots, \beta_{20}$ are evenly spaced in $[\pi/12, \pi/2 - \pi/12]$, while $\beta_{21}, \dots, \beta_{200}$ are evenly spaced in $[\pi/2 + \pi/6, 2\pi - \pi/6]$. The mean vectors are defined as $\boldsymbol{\theta}_i := (\cos(\beta_i), \sin(\beta_i))^\intercal$.

\paragraph{EgeExp 4: High dimension ($D=10$).}
This high-dimensional environment sets $K=50$ and $D=10$. The mean vectors are generated uniformly: $(\boldsymbol{\theta}_1, \dots, \boldsymbol{\theta}_{30}) \sim \mathcal{U}([0.2, 0.45]^{10})^{\otimes 30}$ and $(\boldsymbol{\theta}_{31}, \dots, \boldsymbol{\theta}_{50}) \sim \mathcal{U}([0.55, 0.75]^{10})^{\otimes 20}$.

\paragraph{EgeExp 5: 2 clusters of arms.}
With $K=20$ arms, this environment samples means from two distinct clusters: $(\boldsymbol{\theta}_1, \dots, \boldsymbol{\theta}_{10}) \sim \mathcal{U}([0.2, 0.4]^2)^{\otimes 10}$ and $(\boldsymbol{\theta}_{11}, \dots, \boldsymbol{\theta}_{20}) \sim \mathcal{U}([0.5, 0.7]^2)^{\otimes 10}$.

\paragraph{EgeExp 6: All arms are optimal.}
In this setting, $K=10$ and $D=2$, and every arm belongs to the Pareto set. The means are defined such that for any arm $i$, $\theta_i^1 = 0.75 - 0.65^i$ and $\theta_i^2 = 0.25 + 0.65^i$.

\paragraph{EgeExp 7: All arms have the same sub-optimality gap.}
This environment uses $K=22$ and $D=2$, choosing $\boldsymbol{\Theta}$ such that $\Delta_1 = \dots = \Delta_{22}$. The means are constructed as follows: for $i=1, \dots, 8$, $\boldsymbol{\theta}_i := (0.3 + c_i, 0.8 - c_i)^\intercal$ where $c_i = (i-1) * c$; for $i=9, \dots, 15$, $\boldsymbol{\theta}_i := (0.25 + c_{i-8}, 0.7 - c_{i-8})^\intercal$; and for $i=16, \dots, 22$, $\boldsymbol{\theta}_i := \boldsymbol{\theta}_{i-7} - (0, -0.05)^\intercal$.

\paragraph{EgeExp 8: Geometric progression with a single optimal arm.}
This instance features $K=5$ and $D=2$, where for any $i \in \{1, \dots, 5\}$, the components are identical: $\theta_i^1 = \theta_i^2 = 0.75 - 0.25^i$. This results in a unique optimal arm.\\

Figure~\ref{fig:EgeExpEnvs} provides a visual representation of the reward landscapes for the seven bi-objective maximization environments, excluding the high-dimensional EgeExp4. To facilitate reproducibility and encourage further research, we have made the full implementations of these benchmarking environments publicly available. The source code can be accessed in our GitHub repository\footnote{\url{https://github.com/LennertSaerens/TTPFTS}}.

\subsection{Additional PSI Performance Metrics and Results}
\label{sec:supp:additional_PSI_metrics}
\subsubsection{Bernoulli Metric}
The first way to evaluate the quality of a Pareto set estimate $\estparetoset$ with respect to the actual set of Pareto optimal arms $\paretoset$ is to consider the estimation as a \textit{Bernoulli} trial. The trial is considered a success if the recommendation $\estparetoset$ is exactly equal to the ground-truth set of Pareto optimal arms $\paretoset$. Otherwise, it is considered a failure. A score of 1 is associated with a success, and a score of zero is associated with a failure. These scores for successes and failures can be averaged over multiple runs of the algorithm to obtain the averaged empirical success rate of the algorithm. This yields a proportion that gives a good indication of the algorithm's ability to give perfect recommendations.

\begin{equation}
\label{eq:bern}
\bern(\paretoset,\estparetoset)=\left\{\begin{matrix}
0 \quad \text{if } \estparetoset \neq \paretoset \\
1 \quad \text{if } \estparetoset = \paretoset
\end{matrix}\right.
\end{equation}

The Bernoulli metric $\bern$ presented in \Cref{eq:bern} is a very strict metric: only perfect Pareto set estimates $\estparetoset$ are rewarded. This means that if the estimate contains all but one of the Pareto optimal arms, the Bernoulli metric will be equal to 0. Similarly, if the estimate contains all Pareto optimal arms and some additional suboptimal arm,  the Bernoulli metric will also be equal to 0. From an intuitive point of view, Pareto set estimates that are far from perfect receive the same score as estimates that are nearly perfect. Consequently, the Bernoulli metric does not provide that much insight into the acquisition process of the Pareto front by the MOMAB PSI algorithm. However, we argue that the Bernoulli metric is a strong indicator of the algorithms ability to identify the Pareto optimal arms, that is particularly useful when the goal is to perfectly identify the Pareto front.

\begin{figure}[ht]
    \centering
    \includegraphics[width=1\linewidth]{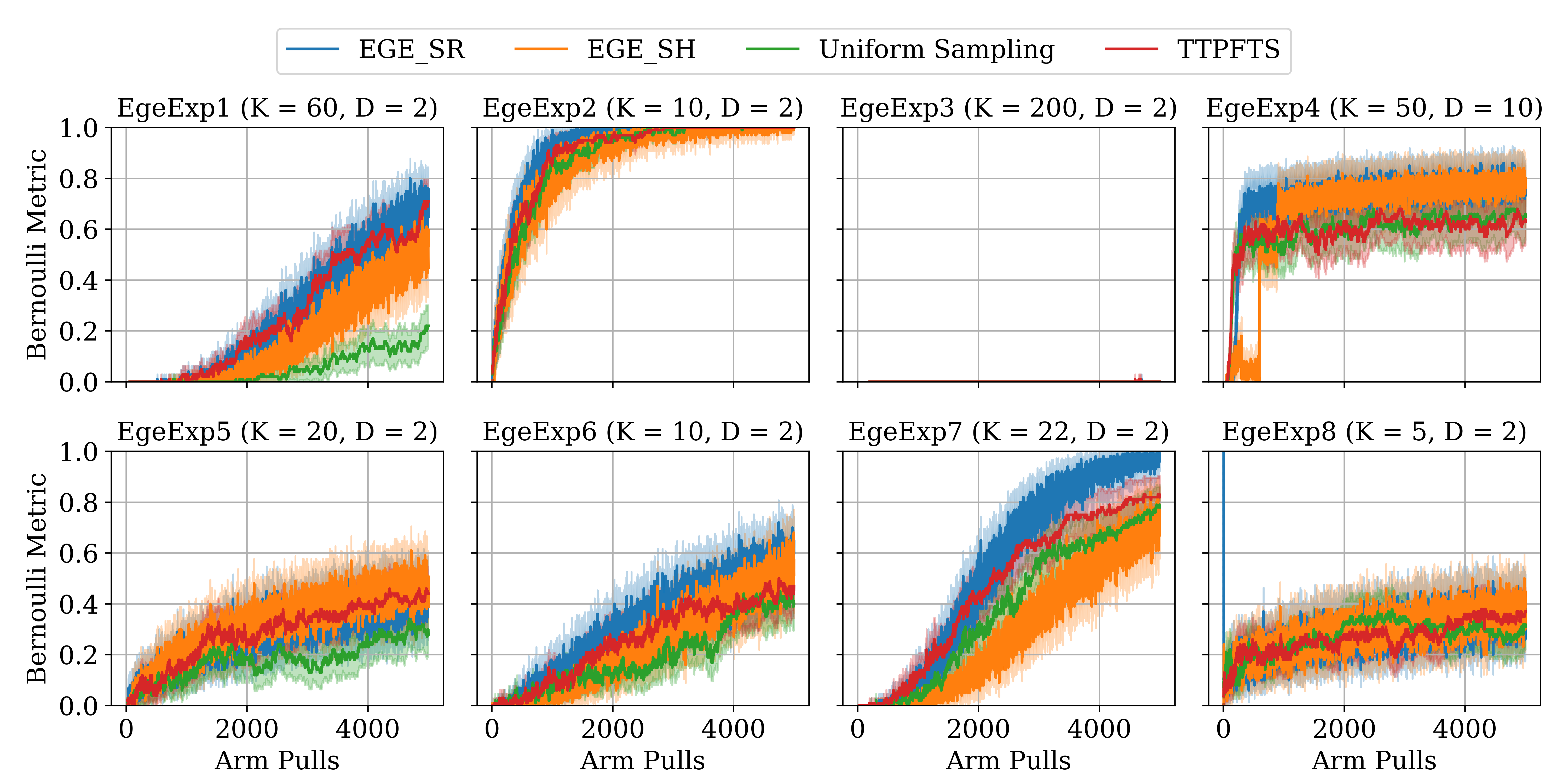}
    \caption{Performance of each of the MOMAB PSI algorithms with respect to the Bernoulli metric in each of the eight synthetic benchmarking settings proposed in $\left[ \text{Kone \textit{et al}., 2023}\right]$. We use Gaussian reward distributions with diagonal covariance matrix $\sigma^2 I_D$ with $\sigma^2=0.25$. Results are averaged over 100 experimental trajectories, with shaded areas indicating the 95\% confidence interval around the mean performance.}
    \label{fig:bernoulli_all_envs}
\end{figure}

\Cref{fig:bernoulli_all_envs} illustrates the evolution of the Bernoulli metric across the synthetic benchmarking environments. In general, these results corroborate the findings from the Jaccard metric analysis, confirming that TTPFTS achieves identification rates of similar quality to the state-of-the-art EGE-SR algorithm, despite the inherent challenge of operating without a fixed budget.

A salient observation across almost all environments (notably EgeExp 1, 2, 5, 6, and 8) is the striking contrast in stability between the methods. The EGE algorithm variants exhibit significant performance variability, characterized by jagged learning curves. This volatility stems from the fixed-budget nature of EGE: its performance is highly sensitive to the specific budget $T$ chosen. For certain budgets, EGE may achieve high high quality Pareto set estimates, while for slightly different budgets, performance can drop precipitously. In contrast, TTPFTS demonstrates a much smoother, more monotonic improvement in success rate as it gathers more evidence. This stability highlights a key advantage of the anytime framework: in real-world scenarios where problem hardness is unknown and selecting an optimal budget is difficult, TTPFTS offers a robust and consistent alternative that does not sacrifice performance for flexibility.

The strict nature of the Bernoulli metric is most evident in EgeExp3, which features a large action space ($K=200$). Here, all algorithms fail to achieve a non-zero success rate within the sampling limit. While this confirms the difficulty of perfectly recovering the exact Pareto set in large-scale settings, it is important to contextualize this with the Jaccard and misidentification results. As previously discussed, TTPFTS achieves the highest Jaccard scores in this environment, indicating that while no method \textit{perfectly} identifies the Pareto set, TTPFTS consistently identifies the highest-quality approximation among all considered methods.

Finally, in the high-dimensional setting of EgeExp4 ($D=10$), TTPFTS exhibits a more conservative learning profile compared to the EGE algorithm variants regarding the Bernoulli metric. This behavior is likely attributable to the sparsity of the action space in high dimensions, which can make the posterior separation of the top-two fronts less stable. However, this result should be interpreted alongside the Jaccard metric, where TTPFTS remains highly competitive. This divergence suggests that while TTPFTS may occasionally miss a single arm or include a near-optimal contender, resulting in a Bernoulli score of 0, the overall quality of the estimated Pareto set remains high.

\subsubsection{Misclassification Metric}
Another metric to evaluate the quality of Pareto set estimates $\estparetoset$ in the context of the MOMAB PSI problem is the \textit{misclassification}. The use of this metrics was proposed in \cite{Kone2023BanditPS}. This metric offers a more granular perspective than the Bernoulli metric and complements the Jaccard metric by quantifying the extent to which individual arms are correctly classified as Pareto (sub)optimal.

\begin{table}[ht]
    \centering
    \begin{tabular}{ll}
        \toprule
        Classification & Condition \\
        \midrule
        True Positive (TP) & $a \in \estparetoset$ and $a \in \paretoset$ \\
        False Positive (FP) & $a \in \estparetoset$ and $a \notin \paretoset$ \\
        True Negative (TN) & $a \notin \estparetoset$ and $a \notin \paretoset$ \\
        False Negative (FN) & $a \notin \estparetoset$ and $a \in \paretoset$ \\
        \bottomrule
    \end{tabular}
    \caption{Classification of arms based on Pareto set estimate $\estparetoset$ and ground-truth Pareto front $\paretoset$.}
    \label{tab:classification_arms}
\end{table}

Given a Pareto set estimate, every arm $a \in [K]$ can be classified into one of four categories, based on the comparison between the algorithm's recommendation $\estparetoset$ and the ground-truth Pareto front $\paretoset$. This is shown in \Cref{tab:classification_arms}. Using these classifications, we define the misclassification rate $M$ as the proportion of arms that are incorrectly classified as Pareto (sub)optimal:
\begin{equation}
    M(\paretoset,\estparetoset) = \frac{\left|\text{FP}\right| + \left|\text{FN}\right|}{K}
\end{equation}
This metric can be averaged over multiple runs of the algorithm to yield a robust estimate of the average misclassification rate. Unlike the Bernoulli metric, which is binary, the misclassification metric provides a finer-grained signal on how well the algorithm is learning to distinguish optimal from suboptimal arms over time.

\begin{figure}[ht]
    \centering
    \includegraphics[width=1\linewidth]{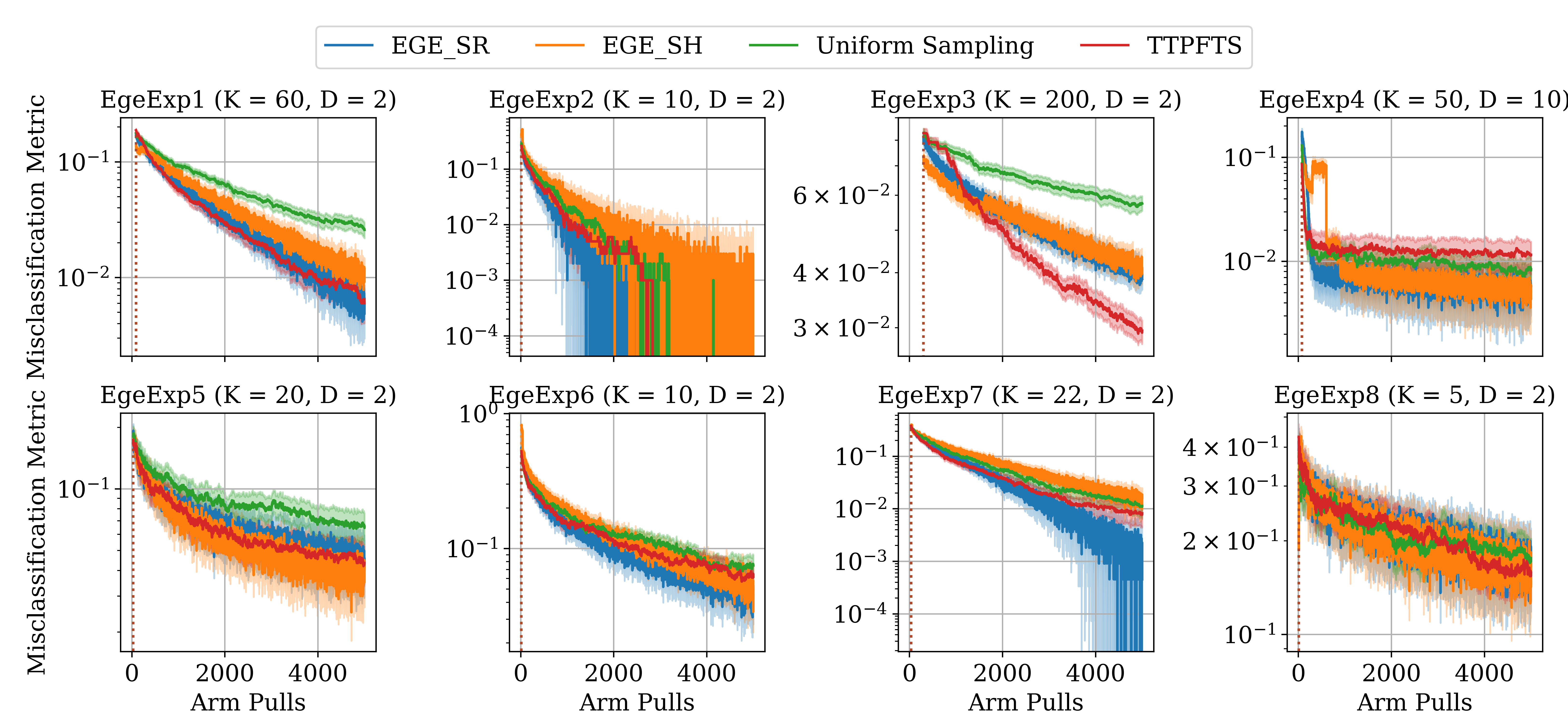}
    \caption{Performance of each of the MOMAB PSI algorithms with respect to the Misclassification metric in each of the eight synthetic benchmarking settings proposed in $\left[ \text{Kone \textit{et al}., 2023}\right]$. We use Gaussian reward distributions with diagonal covariance matrix $\sigma^2 I_D$ with $\sigma^2=0.25$. Results are averaged over 100 experimental trajectories, with shaded areas indicating the 95\% confidence interval around the mean performance.}
    \label{fig:misclassification_all_envs}
\end{figure}

\Cref{fig:misclassification_all_envs} presents the evolution of the misclassification rate across the eight benchmarking environments. These results strongly corroborate the trends observed in the Jaccard and Bernoulli analyses, confirming that TTPFTS delivers PSI performance comparable to the state-of-the-art EGE-SR algorithm. This parity is particularly significant given that the fixed-budget EGE baselines are optimized to minimize error at specific horizons, whereas TTPFTS operates under the stricter constraint of maintaining a valid anytime estimate.

A key advantage of the anytime framework is evident in the stability of the TTPFTS learning curves. Consistent with the Bernoulli results, the EGE variants display marked variability, particularly in EgeExp2, where the misclassification rate fluctuates significantly depending on the budget. In contrast, TTPFTS exhibits a smoother, more monotonic reduction in misclassification errors as it gathers evidence. This stability ensures reliable performance without the need for a priori knowledge of the optimal budget, a crucial feature for real-world deployment.

In terms of specific environments, EgeExp3 ($K=200$) again highlights the efficacy of TTPFTS in large-scale settings. Here, TTPFTS achieves a distinctly lower misclassification rate compared to both EGE variants, reinforcing the conclusion from the Jaccard analysis that its posterior-driven exploration is well-suited for larger action spaces. Conversely, in the high-dimensional setting of EgeExp4 ($D=10$), TTPFTS shows a slightly higher misclassification rate than the EGE algorithms. However, when viewed in conjunction with the high Jaccard scores reported in the main text, this suggests that while TTPFTS may misclassify a small number of arms near the decision boundary, the overall quality and utility of its recommended Pareto set remain high.

\section{Molecular Discovery Setting}
\subsection{Formalization of the Multi-Agent Nature}
\label{sec:supp:mols_formalization}
In the main text, we demonstrate the application of TTPFTS in exploring an ultra-large synthesis-on-demand molecular library. Unlike the synthetic benchmarks, which are evaluated in a standard single-agent framework, this chemical space is generated combinatorially, akin to a Cartesian product: a vast space of product molecules is defined by reacting all valid combinations from smaller, discrete sets of molecular building blocks, called reagents~\cite{GRYGORENKO2020101681}. To clarify the mechanics of this experiment, we formalize this domain as a cooperative multi-agent MOMAB PSI problem.

Let $M$ denote the number of reaction components. We deploy an independent TTPFTS agent for each component $c \in \{1, \dots, M\}$. Each agent $c$ faces a distinct action space $\mathcal{A}_c$, where each arm $a_c \in \mathcal{A}_c$ corresponds to a unique chemical reagent available for that specific component.

At each time step $t$, every agent $c$ simultaneously and independently selects a reagent $a_{c,t} \in \mathcal{A}_c$ according to its own TTPFTS policy. These selections form a complete combinatorial action tuple, $\boldsymbol{a}_t = (a_{1,t}, \dots, a_{M,t}) \in \mathcal{A}_1 \times \dots \times \mathcal{A}_M$, which defines a fully specified product molecule. The properties of this assembled molecule are then evaluated, yielding a $D$-dimensional deterministic reward $\boldsymbol{X}_t=f(\boldsymbol{a}_t) \in \mathbb{R}^D$.

Since in the main text we are considering a three-component reaction, in our experiment $M=3$. Furthermore, in our case, the number of objectives $D=2$, representing Structural Similarity and LogP.

\paragraph{Stochasticity and Non-Stationarity.}
A defining feature of this formulation is the source of the reward stochasticity. From the perspective of a complete combinatorial action tuple, $\boldsymbol{a}_t$, the evaluation function $f(\boldsymbol{a}_t)$ is deterministic. However, from the localized perspective of a single agent $c$, the reward vector observed after pulling arm $a_{c,t}$ is stochastic, as this observed reward $\boldsymbol{X}_t$ depends on the concurrent selections made by the \textit{other} agents, denoted as $\boldsymbol{a}_{-c,t}$.

Consequently, the reward observed by agent $c$ for pulling reagent $a_c$ is an evaluation of $f(a_c, \boldsymbol{a}_{-c,t})$. Because the other agents are learning from online interactions, continuously updating their posterior distributions and shifting their sampling between their respective top-two Pareto fronts, the distribution of $\boldsymbol{a}_{-c,t}$ changes over time. Thus, the expected marginal utility of a reagent $a_c$ is not fixed; the reward distribution perceived by agent $c$ is inherently \textit{non-stationary}.

\subsection{Pareto Front Extraction and Metric Computation.}
\label{sec:supp:mols_psi_uq_metrics}
To evaluate the algorithm's search efficiency over time, we compute the Jaccard similarity between the estimated Pareto front and the true Pareto front at each evaluation step. During the search process, every evaluated molecule and its corresponding multi-objective score are logged sequentially, preserving the exact chronological exploration order. In post-processing, we reconstruct the algorithm's estimated Pareto set at any given time step $t$ by iteratively applying non-dominated sorting to the cumulative collection of molecules evaluated up to that point. As each new evaluation is incorporated, any previously retained molecule that becomes strictly dominated is pruned from the active set. This rolling Pareto front represents the exact non-dominated subset of the agent's search history at step $t$. The learning curve is then generated by calculating the Jaccard metric between this step-wise estimated set and the ground-truth Pareto optimal set.

To compute the uncertainty quantification (UQ) metric in the combinatorial molecular discovery setting, we extend the Bhattacharyya coefficient to evaluate the joint product space. Because the library consists of 94 million possible products derived from three reaction components, explicitly tracking posteriors for all combinations is computationally prohibitive. Instead, we leverage the independence of the agents: for any given product, its 6-dimensional joint posterior (representing two objectives across three components) is constructed by concatenating the independent 2-dimensional Gaussian mean and variance vectors of its constituent reagents. We construct these 6-dimensional product posteriors and immediately apply non-dominated sorting to retain only those products belonging to the local top-two Pareto fronts, discarding the vast majority of strictly dominated combinations. After aggregating these candidates, we compute the global first and second Pareto fronts for the joint space. The final UQ metric is then calculated as the average Bhattacharyya coefficient across all pairs of products between the first and second fronts.

\subsection{MolPAL Baseline Configuration}
\label{sec:supp:molpal_config}
The surrogate model and hyperparameters for the MolPAL baseline were selected through a systematic two-phase tuning process conducted on a reservoir sample of 200,000 molecules drawn from the 94-million quinazoline library. In the first phase, four tree-based ensemble architectures (Random Forest, Extra Trees, Gradient Boosted Regression, and Histogram-based Gradient Boosting) were compared using an offline evaluation protocol that mirrors the active learning training schedule, training each model on incremental subsets of 5,000 to 50,000 molecules and measuring $R^2$, RMSE, and wall-clock time on a held-out set for both objectives. Histogram-based Gradient Boosting achieved the highest predictive accuracy (similarity $R^2 = 0.960$; LogP $R^2 = 0.667$ at 50k training points) while being 5–18× faster to train (35s vs. 200–625s), a critical advantage given that the surrogate is retrained from scratch each iteration and inference over 94M molecules requires ~1.5h per cycle. In the second phase, a focused grid search over 36 Histogram-based Gradient Boosting configurations yielded the final setting (learning rate = 0.1, 500 iterations, unlimited depth, 63 leaf nodes), raising surrogate quality to $R^2 = 0.973$ for similarity and $R^2 = 0.721$ for LogP score.

Molecules were represented as Morgan fingerprints (2048-bit, radius 2) \cite{Rogers2010ECFP}. For acquisition, we employed non-dominated sorting, which ranks candidates front-by-front without scalarization. Because this strategy relies only on mean predictions, it is fully compatible with Histogram-based Gradient Boosting. Neural network surrogates were not pursued, as gradient boosting methods match or exceed Neural Network performance on fingerprint-based inputs while remaining more computationally efficient \cite{Zhang2019LightGBM}.

\subsection{Additional Results}
\label{sec:supp:add_results_mols}
In the main text, we presented the results of a single representative experimental run to illustrate the efficacy of TTPFTS in exploring the ultra-large synthesis-on-demand molecular library. To demonstrate the robustness and consistency of our approach, and to verify that the reported performance is not the result of a favorable random seed, we provide visualizations of ten additional independent experimental runs in this section.

\Cref{fig:pareto_scatter_runs1_6,fig:pareto_scatter_runs7_10} display the Pareto fronts identified by TTPFTS and Random Search compared to the ground-truth Pareto front, obtained via exhaustive virtual screening, for ten additional independent trajectories. These qualitative results align strongly with the quantitative Jaccard metric results reported in the main text.

\begin{figure}[htbp]
    \centering

    \begin{subfigure}{0.45\linewidth}
        \centering
        \includegraphics[width=\linewidth]{figures/pareto_scatter_run0.png}
        \caption{Run 1}
    \end{subfigure}\hfill
    \begin{subfigure}{0.45\linewidth}
        \centering
        \includegraphics[width=\linewidth]{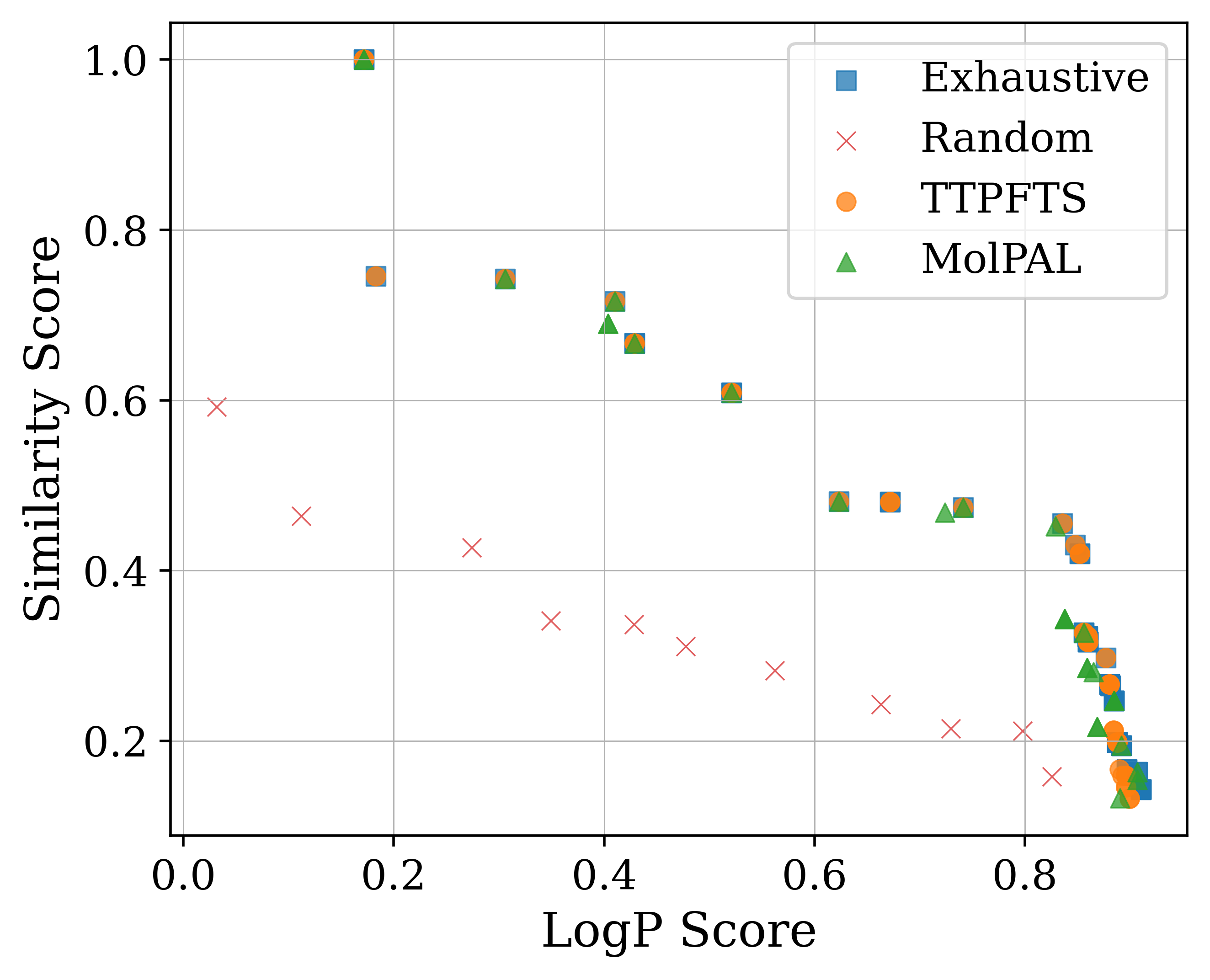}
        \caption{Run 2}
    \end{subfigure}

    \begin{subfigure}{0.45\linewidth}
        \centering
        \includegraphics[width=\linewidth]{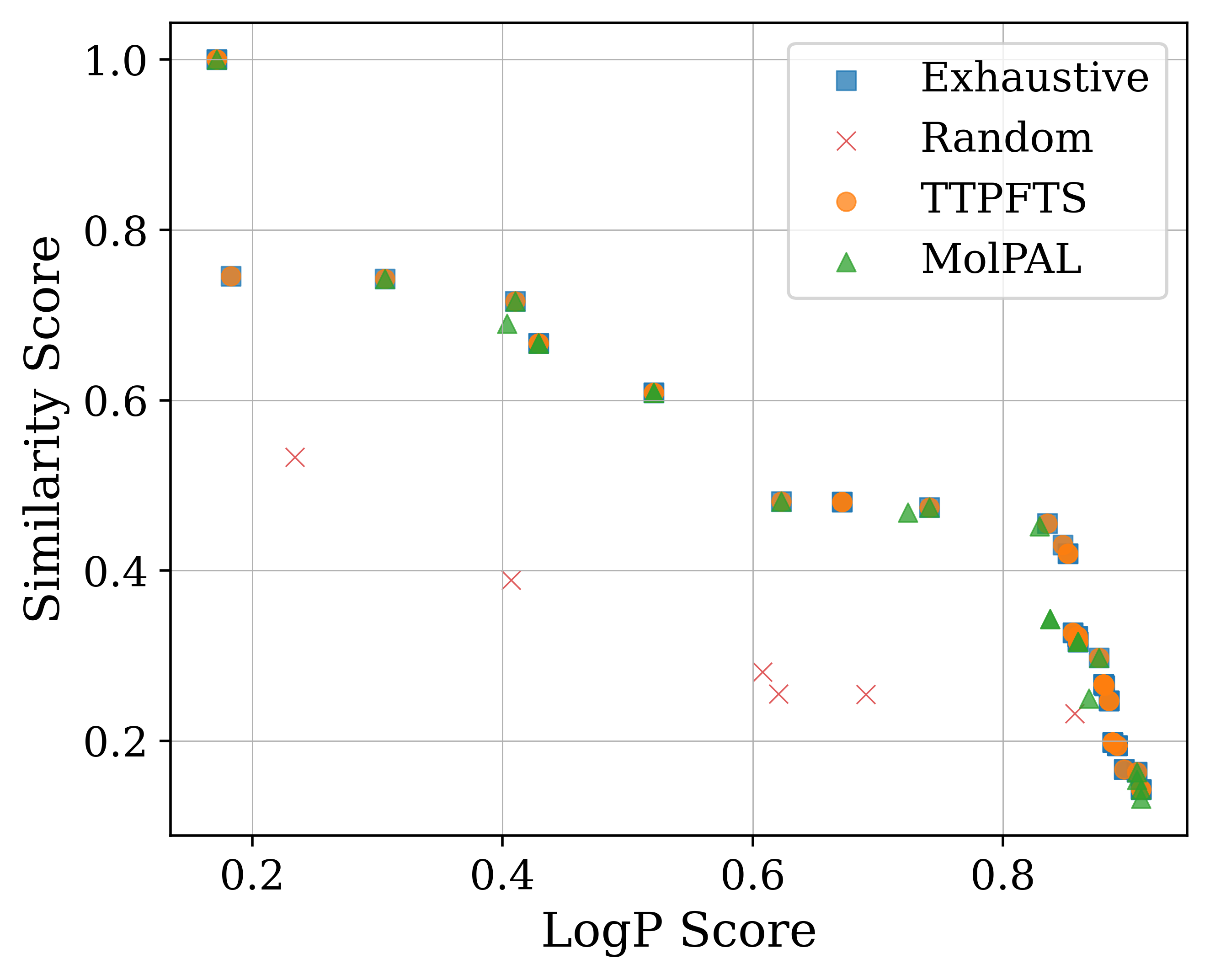}
        \caption{Run 3}
    \end{subfigure}\hfill
    \begin{subfigure}{0.45\linewidth}
        \centering
        \includegraphics[width=\linewidth]{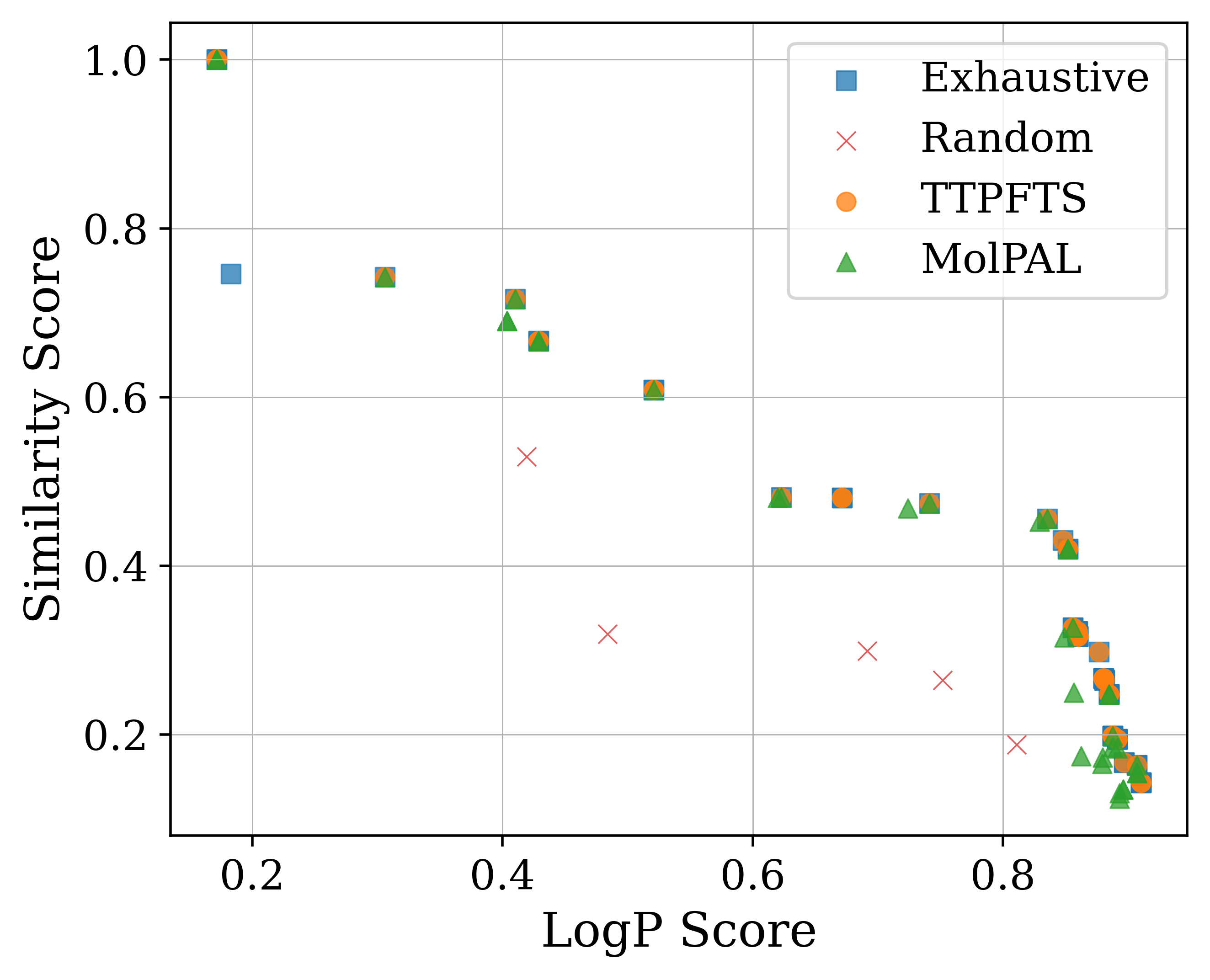}
        \caption{Run 4}
    \end{subfigure}

    \begin{subfigure}{0.45\linewidth}
        \centering
        \includegraphics[width=\linewidth]{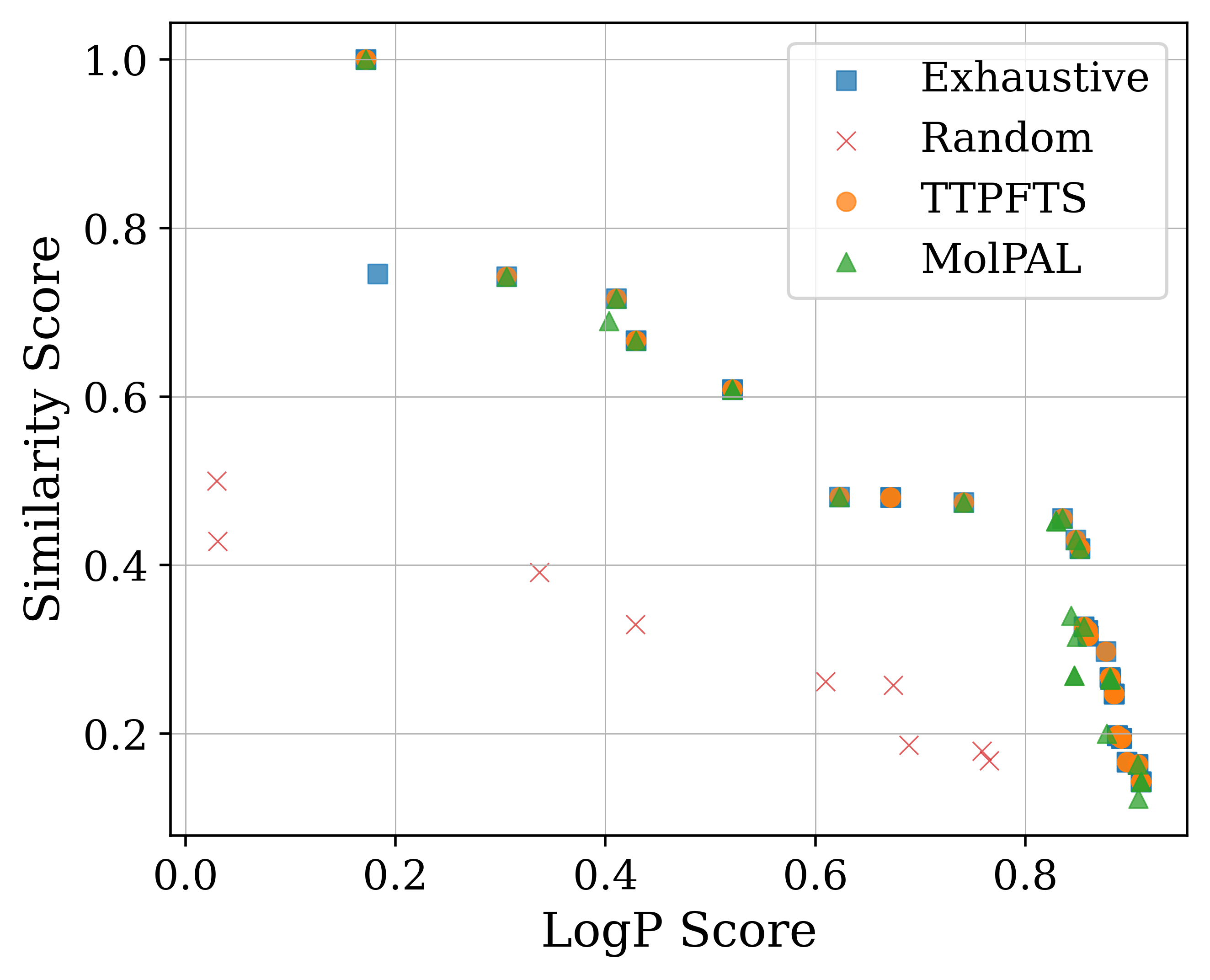}
        \caption{Run 5}
    \end{subfigure}\hfill
    \begin{subfigure}{0.45\linewidth}
        \centering
        \includegraphics[width=\linewidth]{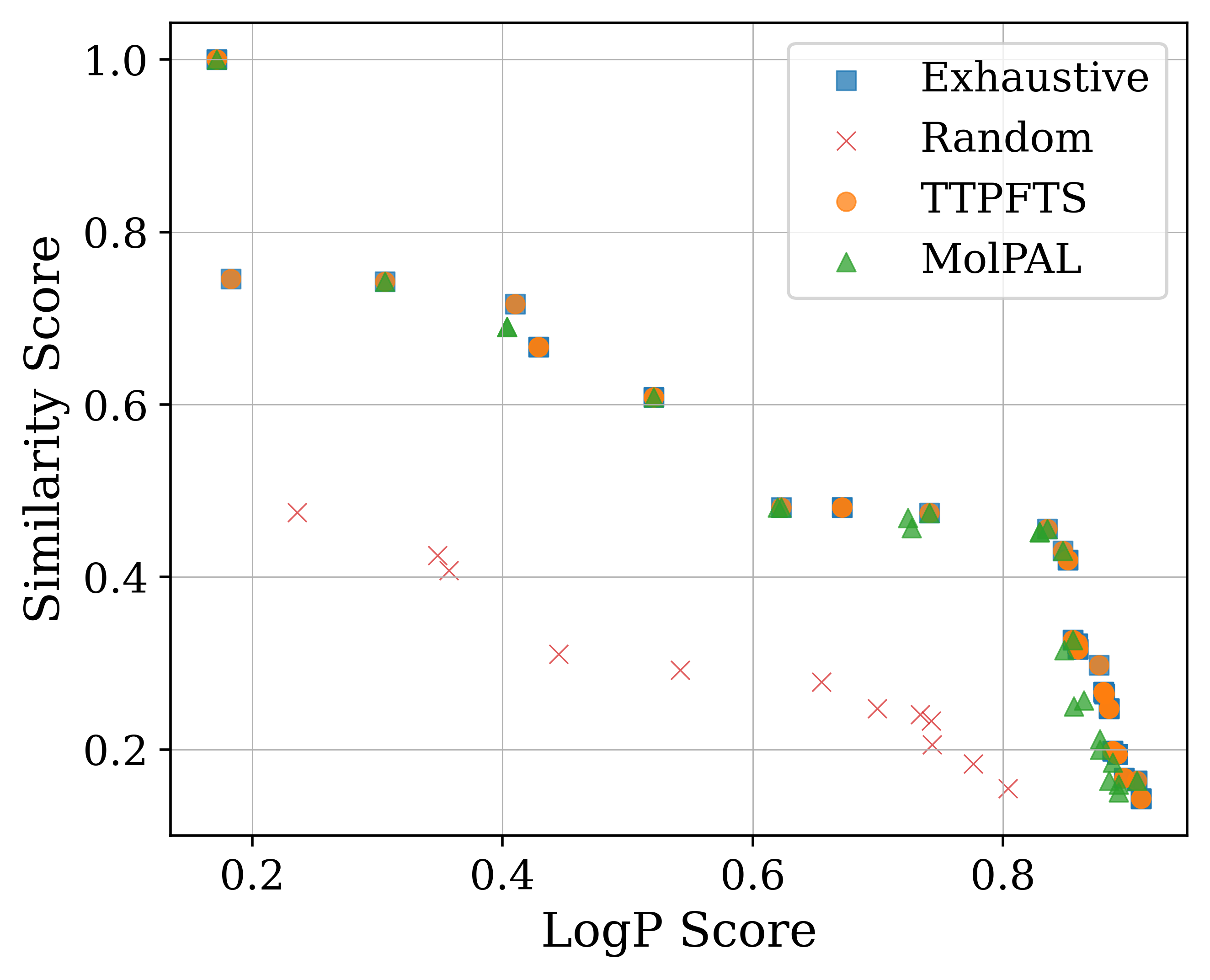}
        \caption{Run 6}
    \end{subfigure}

    \caption{Pareto optimal molecules identified by the Random Search baseline, MolPAL, and TTPFTS, compared to the ground-truth Pareto set of optimal molecules obtained through exhaustive virtual screening of the synthesis-on-demand molecular library. (Runs 1--6)}
    \label{fig:pareto_scatter_runs1_6}
\end{figure}

\begin{figure}[htbp]
    \centering

    \begin{subfigure}{0.45\linewidth}
        \centering
        \includegraphics[width=\linewidth]{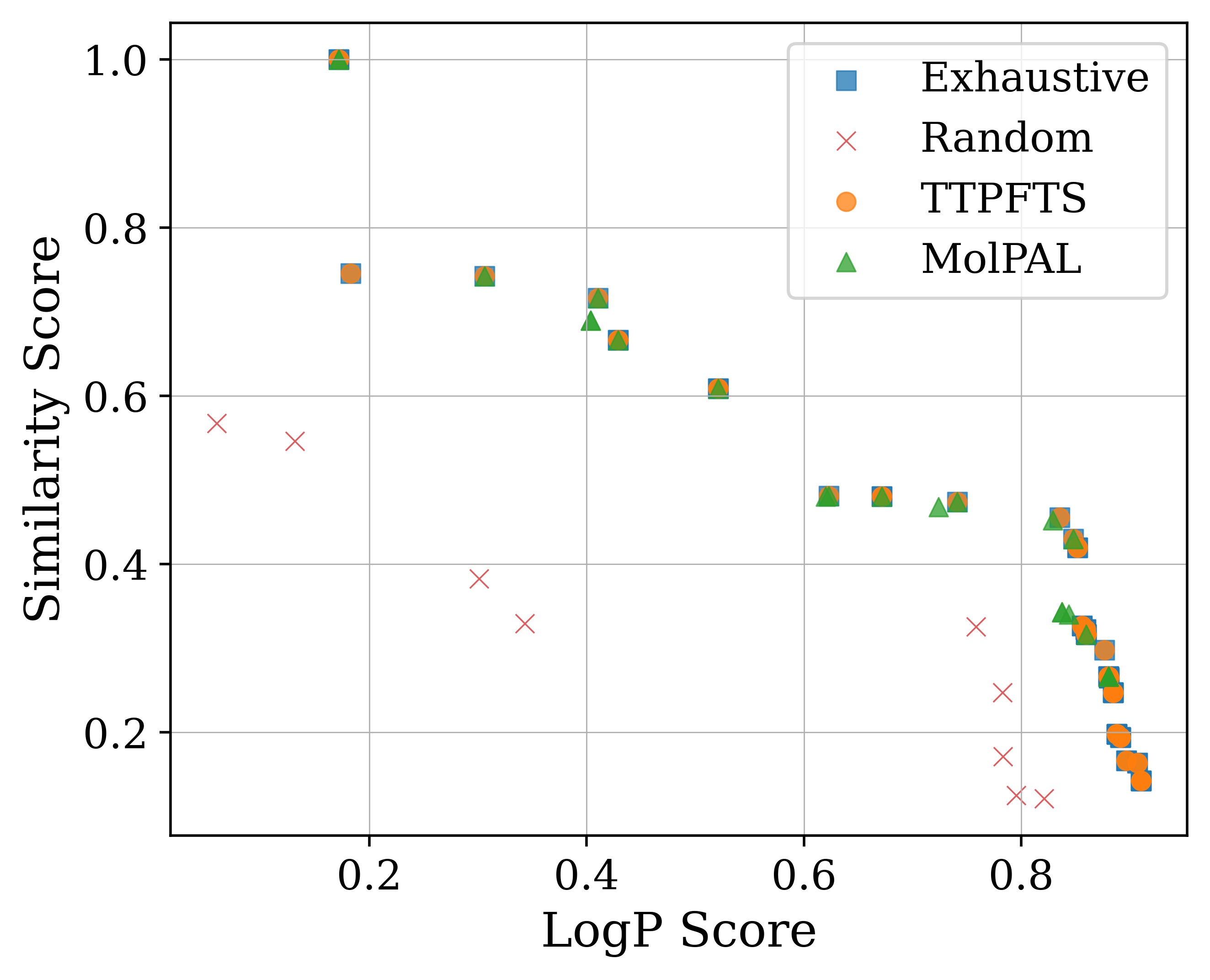}
        \caption{Run 7}
    \end{subfigure}\hfill
    \begin{subfigure}{0.45\linewidth}
        \centering
        \includegraphics[width=\linewidth]{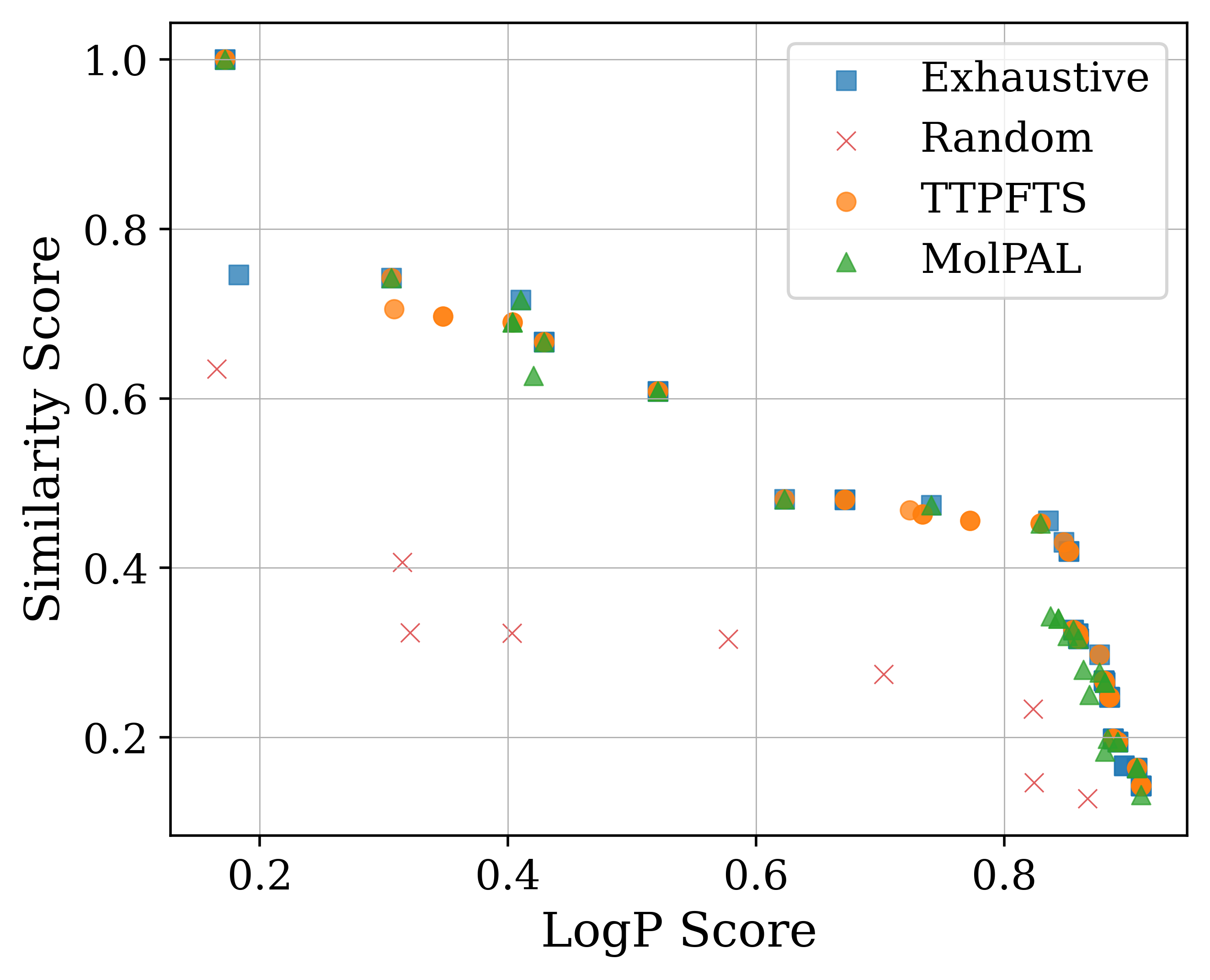}
        \caption{Run 8}
    \end{subfigure}

    \begin{subfigure}{0.45\linewidth}
        \centering
        \includegraphics[width=\linewidth]{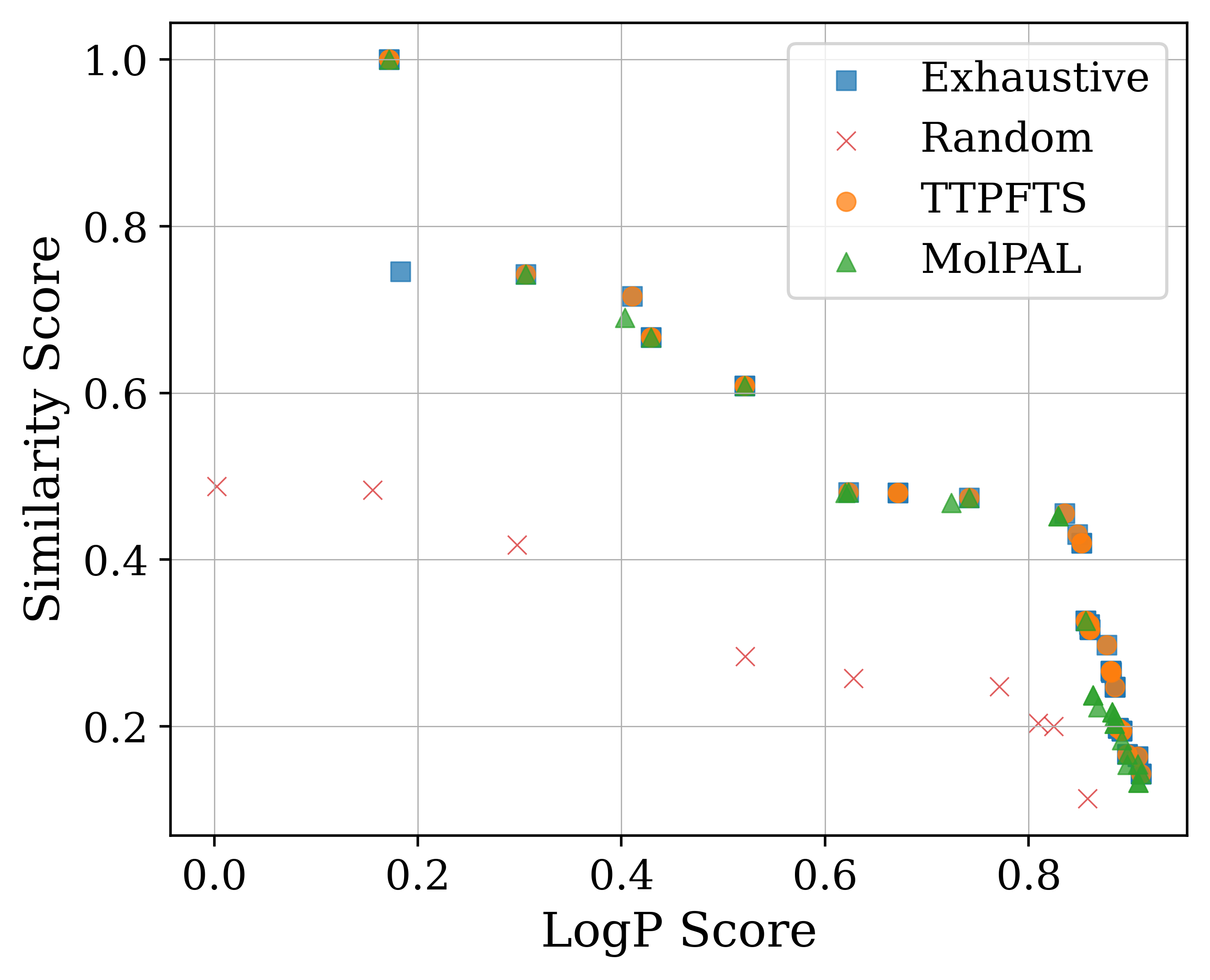}
        \caption{Run 9}
    \end{subfigure}\hfill
    \begin{subfigure}{0.45\linewidth}
        \centering
        \includegraphics[width=\linewidth]{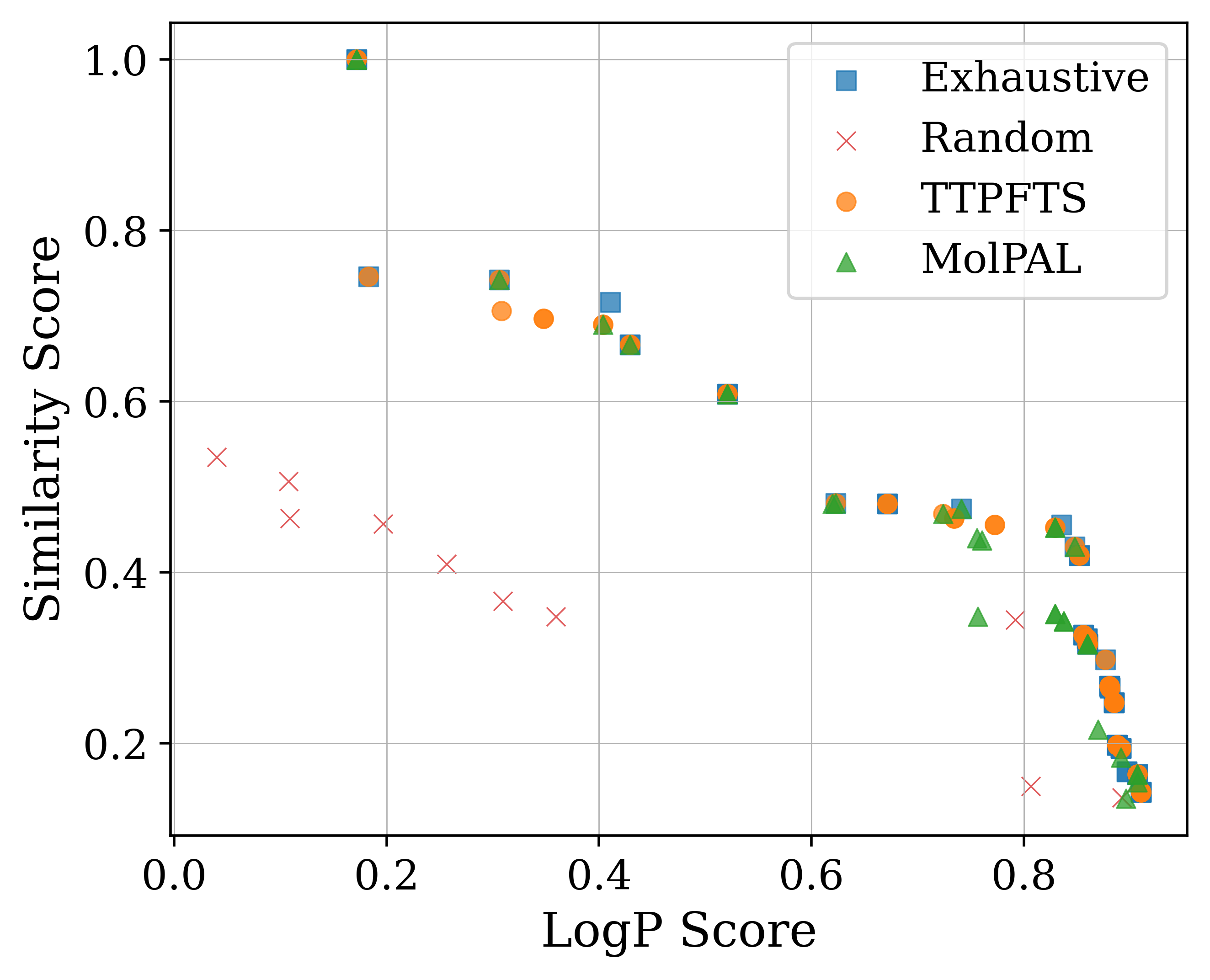}
        \caption{Run 10}
    \end{subfigure}

    \caption{Pareto optimal molecules identified by the Random Search baseline, MolPAL and TTPFTS, compared to the ground-truth Pareto set of optimal molecules obtained through exhaustive virtual screening of the synthesis-on-demand molecular library. (Runs 7--10)}
    \label{fig:pareto_scatter_runs7_10}
\end{figure}

Across all additional runs, TTPFTS consistently recovers the vast majority of the ground-truth Pareto optimal molecules. Notably, the algorithm achieves perfect identification of the entire Pareto front in Runs 3, 6, and 7, capturing every optimal trade-off between structural similarity and LogP. In Runs 4, 5, and 9, the algorithm misses only a very small number (one or two) of the 52 optimal molecules. This high success rate across diverse random seeds confirms that the performance illustrated in the main text is representative of the algorithm's stable behavior.

In the remaining runs where TTPFTS does not fully recover the exact ground-truth set (e.g., Runs 2, 8, and 10), the algorithm exhibits a graceful approximation behavior. The molecules that are identified in place of the missing optimal solutions lie in the immediate vicinity of the true Pareto front in the objective space. This indicates that even in slightly sub-optimal trajectories, TTPFTS successfully focuses its sampling on the most promising regions of the chemical space. Consequently, the resulting estimated Pareto set remains highly valuable for decision-making, providing drug discovery scientists with candidate molecules that are nearly indistinguishable from the optimal set in terms of their property profiles.

Consistent with the findings in the main text, the Random Search baseline fails to identify a single ground-truth Pareto optimal molecule across all ten additional runs. This stark contrast highlights the necessity of intelligent, feedback-driven exploration strategies like TTPFTS when navigating ultra-large combinatorial libraries where exhaustive virtual screening is computationally infeasible.

\section{Uncertainty Quantification Metric}
\label{sec:supp_uq_calc}
In this section, we provide the detailed mathematical formulation used to calculate the uncertainty quantification metric $\UQ$ introduced in the main text. The metric is defined as the average Bhattacharyya coefficient (overlap)~\cite{Bhattacharyya1943} between the posterior distributions of arms in the estimated first Pareto front $\firstfront$ and the estimated second Pareto front $\secondfront$.

\subsection{Posterior Assumptions}
Let the bandit instance $\nu$ consist of $K$ arms and $D$ objectives. At any time step $t$, the algorithm maintains a posterior distribution over the mean reward vector for each arm $a \in [K]$, conditioned on the history $\mathcal{H}^{(t)}$. In the main text, for brevity, we refer to the overlap between arms simply as $\Bhat(a_i, a_j)$. Here, we rigorously define this quantity based on the underlying posterior distributions. We model the posterior belief for any arm $a$ as a multivariate Gaussian distribution:
\begin{equation}
    p_a(\mathbf{x} \mid \mathcal{H}^{(t)}) = \mathcal{N}(\mathbf{x}; \boldsymbol{\mu}_a, \boldsymbol{\Sigma}_a),
\end{equation}
where $\boldsymbol{\mu}_a \in \mathbb{R}^D$ is the posterior mean vector and $\boldsymbol{\Sigma}_a \in \mathbb{R}^{D \times D}$ is the posterior covariance matrix. In our experiments, objectives are modeled independently, resulting in diagonal covariance matrices $\boldsymbol{\Sigma}_a = \text{diag}(\sigma_{a,1}^2, \dots, \sigma_{a,D}^2)$, though the derivation below holds for general covariance matrices~\cite{Fukunaga1990}.

\subsection{Bhattacharyya Distance and Coefficient}
To quantify the separability between two probability distributions $p_i$ and $p_j$, we utilize the \textit{Bhattacharyya distance} $D_B(p_i, p_j)$. For two multivariate Gaussian distributions $p_i = \mathcal{N}(\boldsymbol{\mu}_i, \boldsymbol{\Sigma}_i)$ and $p_j = \mathcal{N}(\boldsymbol{\mu}_j, \boldsymbol{\Sigma}_j)$, the Bhattacharyya distance is derived in~\cite{Fukunaga1990} as:

\begin{equation}
    D_B(p_i, p_j) = \underbrace{\frac{1}{8} (\boldsymbol{\mu}_i - \boldsymbol{\mu}_j)^\top \boldsymbol{\Sigma}^{-1} (\boldsymbol{\mu}_i - \boldsymbol{\mu}_j)}_{\text{Mahalanobis term}} + \underbrace{\frac{1}{2} \ln \left( \frac{\det(\boldsymbol{\Sigma})}{\sqrt{\det(\boldsymbol{\Sigma}_i) \det(\boldsymbol{\Sigma}_j)}} \right)}_{\text{Determinant term}},
\end{equation}

where $\boldsymbol{\Sigma}$ represents the average covariance matrix of the two distributions:
\begin{equation}
    \boldsymbol{\Sigma} = \frac{\boldsymbol{\Sigma}_i + \boldsymbol{\Sigma}_j}{2}.
\end{equation}

The \textit{Bhattacharyya coefficient} $\Bhat$ between two arms $a_i$ and $a_j$, which represents the geometric overlap between their underlying posterior distributions $p_i$ and $p_j$, is derived directly from the Bhattacharyya distance:
\begin{equation}
    \Bhat(a_i, a_j) \triangleq \exp\left( -D_B(p_i, p_j) \right).
\end{equation}
The coefficient ranges from $0$ to $1$, where $0$ implies no overlap, i.e.\ perfect separation, and $1$ implies identical distributions, indicating maximum uncertainty regarding the arms' ordering.

\subsection{Aggregate Uncertainty Measure}
The total uncertainty metric $\UQ$ is computed by averaging the Bhattacharyya coefficients over all pairwise combinations of arms between the first and second Pareto fronts.

Let $\estparetoset_1$ denote the set of arms in the estimated first Pareto front, based on posterior means, and $\estparetoset_2$ denote the set of arms in the estimated second Pareto front. The uncertainty measure is calculated as:
\begin{equation}
    \UQ\left(\estparetoset_1,\estparetoset_2\right) = 
 \frac{1}{|\estparetoset_1| |\estparetoset_2|} \sum_{a_i \in \estparetoset_1} \sum_{a_j \in \estparetoset_2} \Bhat(a_i, a_j)
\end{equation}
This aggregation captures the average overlap between the currently believed best set, i.e.\ the first front, and the challenger set, i.e.\ the second front. A high value indicates that the posterior masses of the optimal and suboptimal arms are heavily intertwined, requiring further exploration to resolve the decision boundary. Conversely, a low value indicates that the probability masses have separated, signifying low uncertainty regarding the current estimate of the Pareto set. The source code containing the full implementation of the proposed uncertainty quantification metric can be accessed in our GitHub repository\footnote{\url{https://github.com/LennertSaerens/TTPFTS}}.

\subsection{Comparison of Uncertainty Quantification Metrics}
\label{sec:uq_comparison}
In the main text, we introduce an uncertainty quantification (UQ) metric based on the Bhattacharyya coefficient to serve as a ground-truth-independent proxy for algorithmic convergence. To contextualize and justify this design choice, this section presents a comparative analysis of our proposed metric against two standard information-theoretic alternatives: Symmetric Kullback-Leibler (KL) divergence and Posterior (Differential) Entropy.

All three metrics evaluate the posterior distributions of the arms estimated to be on the top-two Pareto fronts (denoted as $\firstfront$ and $\secondfront$). The evaluation was conducted across the eight synthetic benchmarking environments (EgeExp1--8). For each environment, we executed 100 independent runs of the TTPFTS algorithm (using an uninformative $t$-distributed prior) for 5,000 arm pulls, saving and evaluating the posterior states at 100-step intervals.

\begin{itemize}
    \item \textbf{Bhattacharyya Coefficient (BC):} Our proposed metric computes the average pairwise overlap between distributions of arms in $\firstfront$ and arms in $\secondfront$. It is bounded in $[0, 1]$, where $0$ indicates no overlap and $1$ indicates identical distributions. \cite{Bhattacharyya1943}
    \item \textbf{Symmetric KL-Divergence:} This metric computes the average pairwise symmetric KL-divergence, $\frac{1}{2}\left(D_{\text{KL}}(p \parallel q) + D_{\text{KL}}(q \parallel p)\right)$, between the same $\firstfront \times \secondfront$ arm pairs. It is unbounded and grows as the posteriors separate. \cite{Kullback1951Information}
    \item \textbf{Posterior Entropy:} This metric calculates the average differential entropy over all arms in the union $\firstfront \cup \secondfront$. Unlike BC and KL-divergence, it measures the overall posterior uncertainty rather than the pairwise separation between the top-two fronts. \cite{MacKay1992ActiveData}
\end{itemize}

To determine how well each metric acts as a proxy for actual performance, we calculated the Pearson correlation coefficient ($r$) between the metric's time series and the ground-truth Jaccard similarity across the experimental runs. A reliable UQ metric should exhibit a strong, consistent correlation with the Jaccard score across diverse environments. The results of this analysis are visualized in Figure~\ref{fig:uq_comparison_combined}.

\begin{figure}[ht]
    \centering
    \includegraphics[width=1\linewidth]{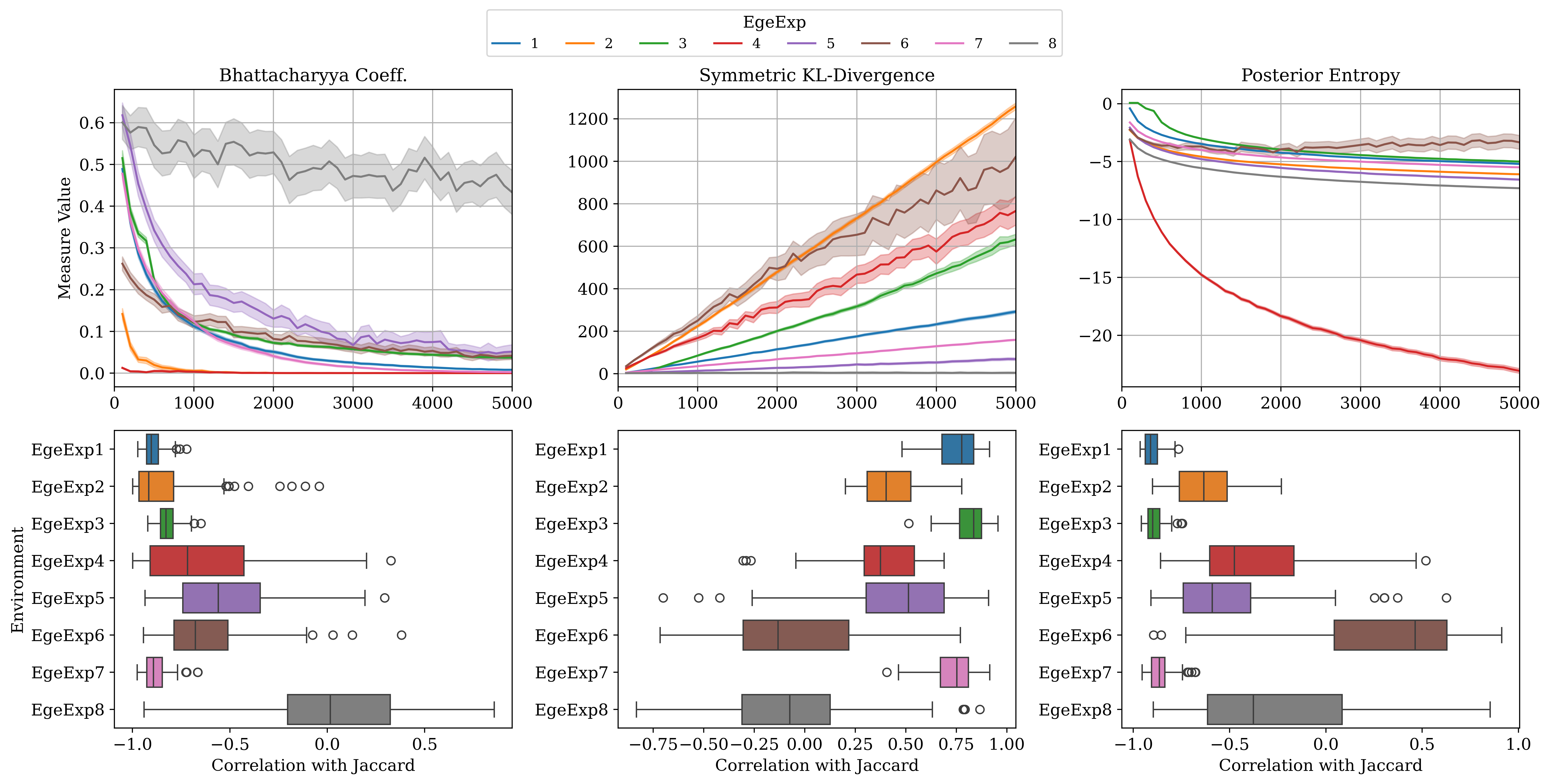}
    \caption{Comparison of uncertainty quantification metrics. \textbf{Top row:} Evolution of the Bhattacharyya Coefficient, Symmetric KL-Divergence, and Posterior Entropy over 5,000 arm pulls (mean $\pm$ 95\% CI over 100 runs). \textbf{Bottom row:} Distribution of per-run Pearson correlation coefficients between each metric and the ground-truth Jaccard similarity, grouped by environment.}
    \label{fig:uq_comparison_combined}
\end{figure}

The Bhattacharyya coefficient proves to be the strongest and most consistent tracker of convergence. It achieves the highest median absolute correlation ($r = -0.80$) and maintains the expected negative correlation, meaning that as overlap decreases, identification accuracy increases, across seven of the eight environments. In contrast, Symmetric KL-Divergence is the weakest overall predictor (median $r = +0.55$). Posterior Entropy demonstrates a strong median correlation ($r = -0.70$) but exhibits high variance across different environments (standard deviation of $0.48$).

The limitations of Posterior Entropy and KL-Divergence become apparent in specific edge cases, most notably in the EgeExp6 environment, where all arms are Pareto-optimal. Because entropy measures absolute posterior spread rather than the separation between fronts, it naturally decreases as the algorithm pulls arms and variances shrink, even if the fronts have not been accurately separated. This structural flaw leads to a problematic sign flip in EgeExp6 ($r = +0.46$). KL-Divergence similarly struggles in this environment and in EgeExp8, yielding near-zero or incorrect-sign correlations.

\section{Theoretical Guarantees}
\label{sec:supp:theoretical}
\subsection{Assumptions}
\begin{assumption}[Finite arms]
\label{ass:finite}
\(\ K \) is finite.
\end{assumption}

\begin{assumption}[Reward \& model regularity]
\label{ass:consistency}
For each arm $a \in [K]$, the rewards $\boldsymbol{X} \sim \nu_{a}$ are i.i.d.\ with mean vector $\boldsymbol{\theta}_a$. The reward distributions belong to a model class for which Bayesian posterior consistency holds (e.g., priors with full support over the parameter space). Specifically, as the number of times arm $a$ is selected $N_a \to \infty$, the posterior mean $\boldsymbol{\hat{\mu}}_a$ converges to the true mean $\boldsymbol{\theta}_a$ almost surely (and thus in probability).
\end{assumption}

\begin{assumption}[Strict Pareto gaps]
\label{ass:gaps}
For every suboptimal arm $s \in \nonparset$, there exists at least one optimal arm $a^* \in \paretoset$ such that $a^*$ strongly dominates $s$ (denoted $\boldsymbol{\theta}_{a^*} \succ \boldsymbol{\theta}_s$). That is, $\theta_{a^*, d} > \theta_{s, d}$ for all objectives $d$. We assume a positive margin of separation sufficient to distinguish this domination.
\end{assumption}

\begin{assumption}[Distinct optimal arms]
\label{ass:distinct_optimal}
Every optimal arm in the true Pareto set possesses a strictly unique expected reward vector. Formally, for any two distinct optimal arms $i, j \in \paretoset$ where $i \neq j$, we assume $\boldsymbol{\theta}_i \neq \boldsymbol{\theta}_j$.
\end{assumption}

\begin{lemma}[Infinite Exploration]
\label{lem:infinite_exploration}
Given that the TTPFTS algorithm utilizes a prior with full support over $\mathbb{R}^D$ (e.g., Gaussian) and a selection parameter $\rho \in (0,1)$, every arm $k \in [K]$ is pulled infinitely often almost surely. That is,
\[
\mathbb{P}\left( \lim_{t\to\infty} N_k^{(t)} = \infty \right) = 1.
\]
\end{lemma}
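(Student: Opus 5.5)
We argue by contradiction, following the standard infinite-exploration argument for Top-Two Thompson Sampling~\cite{pmlr-v49-russo16}. Let $\mathcal{I} = \{k \in [K] : \lim_{t\to\infty} N_k^{(t)} < \infty\}$ be the random set of arms pulled only finitely often, and suppose $\Prob(\mathcal{I} \neq \emptyset) > 0$. We work on this event. There exists a (random) time $T_0$ after which no arm in $\mathcal{I}$ is pulled again. Because each arm in $\mathcal{I}$ receives no new data after $T_0$, its posterior is frozen at some distribution $\pi_k^{\infty}$. Since the prior has full support on $\R^D$ and the history for arm $k$ is finite, $\pi_k^{\infty}$ also has full support. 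Since there are finitely many arms (Assumption~\ref{ass:finite}), $[K]\setminus\mathcal{I}$ is nonempty. Each of its arms is pulled infinitely often, so by Assumption~\ref{ass:consistency} its posterior concentrates: the sampled vectors $\estmean_j$, $j \notin \mathcal{I}$, converge in probability to $\boldsymbol{\theta}_j$ (or at least stay in a bounded set with high probability) for all large $t$.

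The key step is a uniform lower bound on the probability of pulling an arm of $\mathcal{I}$ at every round after $T_0$. Fix $k \in \mathcal{I}$ and let $M$ be a bound with $\|\boldsymbol{\theta}_j\|_\infty < M$ for all $j$. Consider the event $E_t$ that $\estmean_k$ lands in the box $(M+1,\infty)^D$ while every $\estmean_j$, $j\neq k$, lies in $(-\infty,M+1)^D$. On $E_t$, arm $k$ strongly dominates all other sampled means. Therefore $\firstfront=\{k\}$, and with probability at least $\rho$ the algorithm selects $a_t=k$.

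We then lower bound $\Prob(E_t \mid \mathcal{H}^{(t-1)})$. The posterior samples are drawn independently across arms given the history. Hence $\Prob(E_t \mid \mathcal{H}^{(t-1)})$ factorizes into three parts:
\begin{itemize}
\item a term $\pi_k^\infty((M+1,\infty)^D) =: c_k > 0$, which is positive by full support;
\item terms for the other arms in $\mathcal{I}$, each a constant $\pi_j^\infty((-\infty,M+1)^D)>0$;
\item terms for the arms outside $\mathcal{I}$, which tend to $1$ by concentration.
\end{itemize}
So for all $t$ beyond some $T_1$, $\Prob(a_t = k \mid \mathcal{H}^{(t-1)}) \ge \rho\, c/2 =: \varepsilon > 0$. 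Then $\sum_t \Prob(a_t=k\mid\mathcal{H}^{(t-1)}) = \infty$ on this event. By the conditional (Lévy) second Borel--Cantelli lemma, $a_t = k$ infinitely often almost surely on it. This contradicts $k\in\mathcal{I}$, and therefore $\Prob(\mathcal{I}\neq\emptyset)=0$.

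The main obstacle is making the concentration of the non-starved arms rigorous enough to lower-bound their factor uniformly. Assumption~\ref{ass:consistency} gives convergence of posterior means, not of posterior samples. I would first try strengthening it slightly to weak convergence of the posterior to $\delta_{\boldsymbol{\theta}_j}$, which is standard for full-support priors with $t$-distributed or Gaussian posteriors. That yields $\pi_j(\cdot\mid\mathcal{H}^{(t-1)})((-\infty,M+1)^D)\to 1$. Alternatively, the argument can avoid concentration entirely by enlarging $M$ to a random finite bound along a subsequence. A smaller subtlety is that $T_0$ is random; this is handled by applying Lévy's extension conditionally on the event for each fixed $k$ and taking a finite union over $k$.
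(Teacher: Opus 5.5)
Your proposal is correct and follows essentially the same route as the paper. Both arguments freeze the starved arm's full-support posterior, use concentration of the infinitely pulled arms to build a box event on which the starved arm strongly dominates every other sampled mean, lower-bound the pull probability by a positive constant, and conclude with Lévy's conditional Borel--Cantelli lemma.

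If anything, you are more careful than the paper in two places. The paper sets $\mathcal{A}_\infty=[K]\setminus\{k\}$, implicitly assuming only one arm is starved; your positive constant factors for the other frozen arms close that hole. The paper also silently uses posterior concentration rather than the posterior-mean convergence literally stated in Assumption 2, which you correctly flag.
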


\begin{proof}
We proceed by contradiction. Assume there exists an arm $k$ that is pulled only a finite number of times. This implies there exists a finite time step $T_0$ such that for all $t > T_0$, arm $k$ is never selected. Consequently, the posterior distribution for arm $k$, denoted $p_k^{(t)}$, becomes static (frozen) for all $t > T_0$. Since the frozen posterior has full support, finite samples cannot reduce the posterior variance to zero.

Let $\mathcal{A}_{\infty} = [K] \setminus \{k\}$ be the set of arms that are pulled infinitely often. As $t \to \infty$, by the consistency of the posterior (\Cref{ass:consistency}), the posterior mass for any active arm $j \in \mathcal{A}_{\infty}$ concentrates around its true mean $\boldsymbol{\theta}_j$. Therefore, there exists a vector $\mathbf{C} \in \mathbb{R}^D$ (strictly larger than all $\boldsymbol{\theta}_j$) and a time $T_1 > T_0$ such that for all $t > T_1$, the samples from all active arms are bounded by $\mathbf{C}$ with high probability. Specifically, for any $\delta > 0$:
\begin{equation}
    \mathbb{P}(\forall j \in \mathcal{A}_{\infty}: \tilde{\boldsymbol{\theta}}_j^{(t)} < \mathbf{C} \mid \mathcal{H}^{(t-1)}) \ge 1 - \delta.
\end{equation}

Consider the frozen arm $k$. Since its posterior has full support on $\mathbb{R}^D$, there exists a strictly positive constant probability $\epsilon > 0$ that a sample $\tilde{\boldsymbol{\theta}}_k^{(t)}$ drawn from this frozen distribution exceeds the bounding vector $\mathbf{C}$ in all objectives simultaneously:
\begin{equation}
    \epsilon = \mathbb{P}(\tilde{\boldsymbol{\theta}}_k^{(t)} > \mathbf{C} \mid \mathcal{H}^{(t-1)}) > 0.
\end{equation}
Crucially, because the posterior for arm $k$ is fixed for $t > T_0$, $\epsilon$ is constant and does not vanish.

Now, consider the event $D_t$ at time $t > T_1$ where arm $k$ samples a value larger than $\mathbf{C}$ while all active arms sample values smaller than $\mathbf{C}$. The conditional probability of this event is bounded below:
\begin{equation}
    \mathbb{P}(D_t \mid \mathcal{H}^{(t-1)}) \ge \epsilon (1 - \delta).
\end{equation}
If event $D_t$ occurs, arm $k$ strongly dominates all other sampled arms and is therefore included in the first Pareto front $\firstfront$. The algorithm selects an arm uniformly from this first front with probability $\rho$. Thus, the probability of pulling arm $k$ at time $t$ is lower-bounded by:
\begin{equation}
    \mathbb{P}(a_t = k \mid \mathcal{H}^{(t-1)}) \ge \frac{\rho}{|\firstfront|} \cdot \epsilon (1-\delta) \ge \frac{\rho}{K} \epsilon (1-\delta) := \gamma > 0.
\end{equation}
Since the conditional probabilities are bounded below by a positive constant $\gamma$ for all $t > T_1$, the sum of these probabilities diverges:
\begin{equation}
    \sum_{t=T_1}^{\infty} \mathbb{P}(a_t = k \mid \mathcal{H}^{(t-1)}) = \infty.
\end{equation}
By the conditional second Borel-Cantelli lemma (Lévy's extension), the divergence of this sum implies that arm $k$ is selected infinitely often almost surely. This contradicts the assumption that arm $k$ is never pulled after $T_0$. Therefore, every arm must be pulled infinitely often.
\end{proof}

\subsection{Proof of Asymptotic Correctness}
We now provide the full proof for \Cref{theo:asympcorr}:
\begin{proof}
Fix an arbitrary suboptimal arm $s \in \nonparset$. By \Cref{ass:gaps}, there exists a dominating arm $a^* \in \paretoset$ such that $\boldsymbol{\theta}_{a^*} \succ \boldsymbol{\theta}_s$. Under \Cref{lem:infinite_exploration}, all arms are pulled infinitely often, implying $N_{a^*}^{(t)} \to \infty$ and $N_s^{(t)} \to \infty$ almost surely as $t \to \infty$.

By \Cref{ass:consistency}, the posterior means concentrate around their true values. Thus,
\[
\boldsymbol{\hat{\mu}}_{a^*}^{(t)} \to \boldsymbol{\theta}_{a^*} \quad \text{and} \quad \boldsymbol{\hat{\mu}}_s^{(t)} \to \boldsymbol{\theta}_s
\]
almost surely. Since $\boldsymbol{\theta}_{a^*} \succ \boldsymbol{\theta}_s$ with a strictly positive margin, for sufficiently large $t$, the ordering is preserved by the estimates, and we have $\boldsymbol{\hat{\mu}}_{a^*}^{(t)} \succ \boldsymbol{\hat{\mu}}_s^{(t)}$ almost surely. Consequently, the probability of the estimated means failing to reflect this dominance decays to zero:
\[
\lim_{t \to \infty} \mathbb{P}\big( \boldsymbol{\hat{\mu}}_{a^*}^{(t)} \nsucc \boldsymbol{\hat{\mu}}_s^{(t)} \big) = 0.
\]
Recall that for a suboptimal arm $s$ to be included in the estimated Pareto set $\estparetoset_t$, it must not be dominated by \textit{any} other arm in the current estimation. Therefore, if $s \in \estparetoset_t$, it specifically implies that $s$ is not dominated by $a^*$:
\[
\{s \in \estparetoset_t\} \subseteq \{ \boldsymbol{\hat{\mu}}_{a^*}^{(t)} \nsucc \boldsymbol{\hat{\mu}}_s^{(t)} \}.
\]
Taking probabilities and limits, we obtain:
\[
\mathbb{P}\big( s \in \estparetoset_t \big) \le \mathbb{P}\big( \boldsymbol{\hat{\mu}}_{a^*}^{(t)} \nsucc \boldsymbol{\hat{\mu}}_s^{(t)} \big) \xrightarrow{t \to \infty} 0.
\]

Since the set of suboptimal arms $\nonparset$ is finite (\Cref{ass:finite}), we apply the union bound:
\[
\mathbb{P}\big( \estparetoset_t \cap \nonparset \neq \varnothing \big)
= \mathbb{P}\left( \bigcup_{s \in \nonparset} \{ s \in \estparetoset_t \} \right)
\le \sum_{s \in \nonparset} \mathbb{P}\big( s \in \estparetoset_t \big).
\]
As $t \to \infty$, each term in the finite sum tends to zero, concluding that:
\[
\lim_{t\to\infty} \mathbb{P}\big( \estparetoset_t \cap \nonparset \neq \varnothing \big) = 0.
\]

While the previous argument establishes \textit{Soundness} (no suboptimal arms are included), asymptotic correctness also requires \textit{Completeness} (all optimal arms are included). 
We proceed similarly by contradiction. Consider an arbitrary optimal arm $a^* \in \paretoset$. For $a^*$ to be excluded from the estimate $\estparetoset$, there must exist some other arm $j \in [K]$ that strongly dominates it in the current estimation, i.e., $\boldsymbol{\hat{\mu}}_{a^*}^{(t)} \prec \boldsymbol{\hat{\mu}}_{j}^{(t)}$. By the definition of Pareto optimality, in the ground-truth, no arm $j$ strongly dominates $a^*$ (i.e., $\boldsymbol{\theta}_{j} \prec \boldsymbol{\theta}_{a^*}$). Furthermore, by \Cref{ass:distinct_optimal}, no other arm shares the exact expected reward vector of $a^*$. Under \Cref{ass:consistency} and \Cref{lem:infinite_exploration}, as $t \to \infty$, the posterior means converge almost surely to the true means: $\boldsymbol{\hat{\mu}}_{j}^{(t)} \to \boldsymbol{\theta}_j$ and $\hat{\boldsymbol{\mu}}_{a^*}^{(t)} \to \boldsymbol{\theta}_{a^*}$. Consequently, the event that an estimated mean vector $\boldsymbol{\hat{\mu}}_{j}^{(t)}$ erroneously dominates $\boldsymbol{\hat{\mu}}_{a^*}^{(t)}$ becomes impossible in the limit. Taking the union bound over the finite set of optimal arms $\paretoset$, the probability of excluding any optimal arm vanishes:
\[
    \lim_{t\to\infty} \mathbb{P}\big(\exists a^* \in \paretoset : a^* \notin \estparetoset_t\big) = 0.
\]
Combining this with the result for suboptimal arms, we conclude that the estimator is asymptotically consistent:
\[
    \lim_{t\to\infty} \mathbb{P}(\estparetoset_t \neq \paretoset) = 0.
\]
\end{proof}

While Theorem~\ref{theo:asympcorr} establishes asymptotic consistency, deriving explicit finite-time rates of the error probability requires characterizing the diffusion of the posterior mass for top-two sampling in the multi-objective setting. In the single-objective domain, analyses by Russo~\cite{pmlr-v49-russo16} and subsequent works~\cite{Shang2019FixedConfidenceGF} suggest that the error probability for top-two strategies decays exponentially with the number of samples, scaled by the problem complexity. We hypothesize that a similar exponential decay holds for TTPFTS, governed by the multi-objective Pareto gaps defined in Assumption~\ref{ass:gaps}.

\section{Limitations}
\label{sec:limitations}
We acknowledge several theoretical, empirical, and computational limitations that provide avenues for future research.

\paragraph{Theoretical Guarantees and Finite-Time Bounds.} 
While \Cref{theo:asympcorr} establishes the asymptotic correctness of TTPFTS, deriving explicit finite-time error bounds remains an important direction for future work. Historically, finite-time guarantees for both standard Thompson Sampling~\cite{Thompson1933,AgrawalGoyal2013} and Top-Two Thompson Sampling~\cite{pmlr-v49-russo16,Shang2019FixedConfidenceGF} took years to emerge following their initial asymptotic proofs. In this paper, we focus on establishing foundational theoretical soundness alongside immediate practical utility. We validate this utility through strong empirical performance against state-of-the-art baselines, the introduction of a real-time uncertainty metric, and a deployment in a complex drug discovery application.

\paragraph{Robustness to Prior Selection.} 
The theoretical guarantee of infinite exploration (\Cref{lem:infinite_exploration}), which is necessary for asymptotic correctness, relies heavily on the algorithm utilizing a prior with full support over an unbounded space. If a bounded prior, such as a Beta distribution, were employed, the algorithm might permanently ignore temporarily ``frozen'' arms if their bounded support fails to overlap with the empirical optimal arms. Consequently, users must exercise caution; applying TTPFTS with strictly bounded priors in domains with bounded rewards may compromise the algorithm's asymptotic correctness.

\paragraph{Structural Dilution of the Uncertainty Metric.} 
The proposed uncertainty quantification metric aggregates the Bhattacharyya coefficients by averaging them over all pairwise combinations between the top-two Pareto fronts. In the multi-objective space, arms that are distant from one another naturally exhibit near-zero overlap. Hence, as the sizes of the Pareto fronts increase, the number of localized (overlapping) pairs grows approximately linearly, whereas the total number of pairs (the denominator) grows quadratically. This structural dynamic inherently shrinks the aggregate metric as the Pareto front widens. Consequently, this dilution effect could mask localized areas of high uncertainty, which adds the need for nuance to the claim that the metric is environment-independent. Future iterations of this metric might explore maximum overlap or localized aggregation methods to provide a more robust and uniform stopping criterion.

\paragraph{Empirical Hardness and Factors Influencing Performance.} 
Empirically, the performance of TTPFTS degrades in environments characterized by geometrically vanishing sub-optimality gaps, as evidenced by the results on the EgeExp8 benchmark. In such difficult settings, distinguishing the single optimal arm from near-optimal challengers requires a significantly larger sampling budget, causing the algorithm to struggle in forming a stable, high-confidence estimate early in the process.

\paragraph{Computational Scalability.} 
Finally, the computational efficiency of TTPFTS scales differently than simple fixed-budget methods. The algorithm requires continuous non-dominated sorting to update the first and second Pareto fronts, as well as posterior sampling for every arm at each time step. While this computational overhead is manageable for the settings explored in this paper (including the implicit 94-million combinatorial space), the cost of exact non-dominated sorting scales poorly as the number of arms $K$ or the number of objectives $D$ becomes extremely large. Deploying TTPFTS in environments with massive, unstructured action spaces may require approximate sorting mechanisms or more aggressive arm-pruning heuristics.

\end{document}